\documentclass[sigconf]{acmart}
\usepackage{multirow}
\usepackage{amsmath}
\usepackage{array}
\usepackage[table]{xcolor}
\usepackage{siunitx}
\definecolor{NYCband}{RGB}{248,252,255}
\definecolor{TKYband}{RGB}{250,248,255}
\definecolor{CAband}{RGB}{250,255,248}
\definecolor{FullRow}{RGB}{245,245,245}
\definecolor{BestCell}{RGB}{255,235,210}
\newcolumntype{Y}{>{\columncolor{NYCband}}S[table-format=1.4]}
\newcolumntype{T}{>{\columncolor{TKYband}}S[table-format=1.4]}
\newcolumntype{C}{>{\columncolor{CAband}}S[table-format=1.4]}
\newcommand{\bestab}[1]{\multicolumn{1}{>{\columncolor{BestCell}\bfseries}S[table-format=1.4]}{#1}}
\newcommand{\rmv}{\textcolor{black!55}{\scriptsize($-$)}}
\definecolor{Top2Gray}{RGB}{90,90,90}
\definecolor{ImprovNum}{RGB}{220,20,120}
\newcommand{\best}[1]{{\bfseries\color{black}#1}}
\newcommand{\second}[1]{{\color{Top2Gray}\underline{#1}}}
\newcommand{\up}[1]{{\color{ImprovNum}\bfseries\small$\uparrow$\,#1}}
\usepackage{booktabs}
\usepackage{graphicx}
\usepackage[toc]{appendix}
\usepackage{subcaption}
\newcommand{\hNYC}[1]{\multicolumn{1}{>{\columncolor{NYCband}}c}{#1}}
\newcommand{\hTKY}[1]{\multicolumn{1}{>{\columncolor{TKYband}}c}{#1}}
\newcommand{\hCA}[1]{\multicolumn{1}{>{\columncolor{CAband}}c}{#1}}
\AtBeginDocument{%
 }

\setcopyright{acmlicensed}
\copyrightyear{2018}
\acmYear{2018}
\acmDOI{XXXXXXX.XXXXXXX}
\acmConference[Conference acronym 'XX]{Make sure to enter the correct
  conference title from your rights confirmation email}{June 03--05,
  2018}{Woodstock, NY}
\acmISBN{978-1-4503-XXXX-X/2018/06}

\begin{document}

\title{Mag-Mamba: Modeling Coupled spatiotemporal Asymmetry for POI Recommendation}
\author{Zhuoxuan Li}
\authornote{Both authors contributed equally to this research.}
\affiliation{
  \institution{College of Computer Science and Technology, Tongji University}
  \city{Shanghai}
  \country{China}
}
\email{li\_zhuoxuan@outlook.com}

\author{Tangwei Ye}
\authornotemark[1]
\affiliation{
  \institution{College of Computer Science and Technology, Tongji University}
  \city{Shanghai}
  \country{China}
}
\email{yetw@tongji.edu.cn}

\author{Jieyuan Pei}
\affiliation{
  \institution{College of Information Engineering, Zhejiang University of Technology}
  \city{Hangzhou}
  \country{China}
}
\email{peijieyuan@zjut.edu.cn}

\author{Haina Liang}
\affiliation{
  \institution{School of Mathematics Sciences, Tongji University}
  \city{Shanghai}
  \country{China}
}
\email{RationalHaina@tongji.edu.cn}

\author{Zhongyuan Lai}
\affiliation{
  \institution{Shanghai Ballsnow Intelligent Technology Co., Ltd}
  \city{Shanghai}
  \country{China}
}
\email{zhongyuan.lai@ballsnow.com}

\author{Zihan Liu}
\affiliation{
  \institution{College of Computer Science and Technology, Tongji University}
  \city{Shanghai}
  \country{China}
}
\email{tongjilzh@gmail.com}

\author{Yiming Wu}
\affiliation{
  \institution{College of Computer Science and Technology, Tongji University}
  \city{Shanghai}
  \country{China}
}
\email{1875708412@qq.com}

\author{Qi Zhang}
\affiliation{
  \institution{College of Computer Science and Technology, Tongji University}
  \city{Shanghai}
  \country{China}
}
\email{zhangqi\_cs@tongji.edu.cn}

\author{Liang Hu}
\authornote{Corresponding author}
\affiliation{
  \institution{College of Computer Science and Technology, Tongji University}
  \city{Shanghai}
  \country{China}
}
\email{rainmilk@gmail.com}

\renewcommand{\shortauthors}{Li et al.}

\begin{abstract}
Next Point-of-Interest (POI) recommendation is a critical task in location-based services, yet it faces the fundamental challenge of coupled spatiotemporal asymmetry inherent in urban mobility. Specifically, transition intents between locations exhibit high asymmetry and are dynamically conditioned on time. Existing methods, typically built on graph or sequence backbones, rely on symmetric operators or real-valued aggregations, struggling to unify the modeling of time-varying global directionality. To address this limitation, we propose Mag-Mamba, a framework whose core insight lies in modeling spatiotemporal asymmetry as phase-driven rotational dynamics in the complex domain. Based on this, we first devise a Time-conditioned Magnetic Phase Encoder that constructs a time-conditioned Magnetic Laplacian on the geographic adjacency graph, utilizing edge phase differences to characterize the globally evolving spatial directionality. Subsequently, we introduce a Complex-valued Mamba module that generalizes traditional scalar state decay into joint decay-rotation dynamics, explicitly modulated by both time intervals and magnetic geographic priors. Extensive experiments on three real-world datasets demonstrate that Mag-Mamba achieves significant performance improvements over state-of-the-art baselines.
\end{abstract}

\ccsdesc[500]{Information systems~Recommender systems}
\keywords{POI Recommendation, Mamba, Magnetic Laplacian}

\received{20 February 2007}
\received[revised]{12 March 2009}
\received[accepted]{5 June 2009}
\maketitle
\section{Introduction}
With the rapid proliferation of Location-Based Social Networks (LBSNs) and mobile internet platforms, massive amounts of geo-tagged user behavior data have continuously accumulated \cite{sanchez2022point,wang2019sequential,yang2014modeling}. Consequently, predicting the next Point-of-Interest (POI) based on historical trajectories—known as POI recommendation—has become a critical capability for location-based services, intelligent navigation, and local lifestyle platforms. Unlike traditional sequence recommendation, urban mobility exhibits significant spatiotemporal asymmetry. The accessibility, traffic flow, and semantic intent between the same pair of locations often differ in opposite directions, and these differences vary with temporal conditions. This requires POI recommendation not only to answer "where to go next" but also to handle complex spatiotemporal dependencies within a dynamically evolving anisotropic field.

\begin{figure}
\captionsetup{skip=3pt,belowskip=-15pt}
\centering
\includegraphics[width=\linewidth]{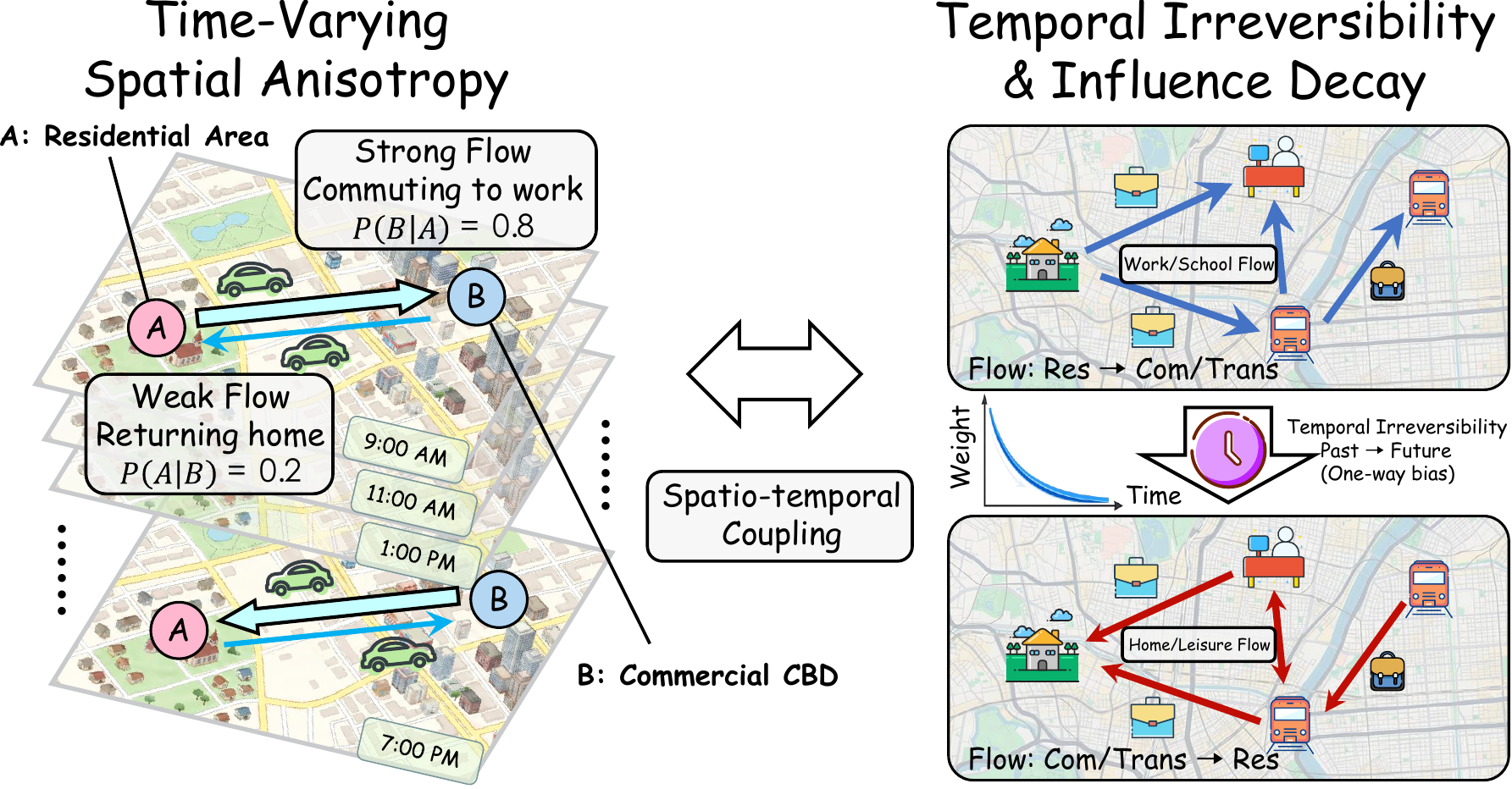}
\caption{Illustration of the Coupled Spatiotemporal Asymmetry in urban mobility.}
\label{task}
\end{figure}

This asymmetry manifests in real-world scenarios as a deep coupling between two dimensions. As illustrated in Figure \ref{task}, first, the dominant flow direction in urban topology is highly time-conditioned. Influenced by one-way streets, functional zoning (e.g., flow from residential to commercial areas), and traffic patterns, the transition intensity from location $A$ to $B$ is often significantly different from that of $B$ to $A$ (i.e., $P(B|A) \neq P(A|B)$) under a given temporal context. Moreover, the dominant direction may shift or even reverse over time. Second, time intervals not only determine the decay strength of historical information but also modulate the magnitude of spatial constraints during state recurrence. Consequently, the model is required to simultaneously learn the time-conditioned direction field and how time intervals scale the intensity of sequence updates.

Despite significant progress in existing recommendation methods, the characterization of this coupled asymmetry remains insufficient. Graph-based methods that leverage geo-aware GNN encoders  \cite{li2021you,qin2023disenpoi,xu2023revisiting} can encode high-order geographic proximity but typically rely on symmetric operators in the real domain (such as undirected adjacency matrices) or decouple directed relationships into separate in-degree and out-degree structures, making it difficult to stably characterize reciprocal directions and their temporal switching under a unified parameterization. Sequential methods based on LSTM and Transformer \cite{baral2018caps,baral2018close,wang2019spent} predominantly rely on real-valued similarity aggregation, where directional constraints are usually injected as external features. Meanwhile, recent POI recommenders using state-space and Mamba-like architectures \cite{chen2024geomamba,jiang2026hierarchical,qin2025geomamba} offer linear-time long-sequence modeling, but most designs still operate with real-valued state compression and updates, lacking an explicit mechanism to encode geographic directionality and a time-varying direction field. The goal of this paper is to incorporate the temporal switching of the direction field and the update scaling of time intervals into a unified recurrence mechanism.

To directly address this coupling challenge, we adopt a phase rotation perspective in the complex domain to model spatiotemporal asymmetry. In the complex plane, a unit complex number $e^{i\theta}$ corresponds to a rotation by angle $\theta$, and a reciprocal relationship naturally corresponds to phase negation. This allows the model to characterize reciprocal directions under a unified parameterization, avoiding the need to learn two independent representations for opposite directions. This concept has been proven effective for expressing asymmetric relationships in knowledge graph reasoning \cite{sun2019rotate} and magnetic Laplacian modeling on directed graphs \cite{zhang2021magnet}. However, treating phases merely as local free parameters makes it difficult to guarantee the global consistency and decomposability of phase signals across the entire graph. To this end, we introduce the Magnetic Laplacian as a structured generation mechanism for phases. Leveraging its stable spectral structure, we can extract globally optimized directional bases from its eigenvector phases. Finally, to characterize "how time intervals scale updates," we map time intervals to continuous step sizes and utilize the directional signals generated by magnetic phases as rotation drivers to jointly perform sequence updates.

Based on this, we propose Mag-Mamba. The model first constructs a time-dependent directional distribution on the geographic adjacency graph derived from the directional transition statistics of real trajectories via a magnetic phase encoder, forming a corresponding Magnetic Laplacian to generate edge phase differences. Subsequently, these phase differences are injected into a Complex-valued Mag-Mamba layer as rotation drivers. We generalize the real-valued update of traditional SSMs to a decay-plus-rotation dynamics, where the phase differences serve as explicit rotation signals and continuous step sizes characterize the scaling of update intensity by time intervals, thereby achieving a unified modeling of the coupled spatiotemporal asymmetry.

Our contributions are summarized as follows:
\begin{itemize}
    \item We propose a Time-conditioned Magnetic Phase Encoder that explicitly models and dynamically characterizes time-varying asymmetric geographic flows in the complex plane through low-rank temporal basis decomposition.
    \item We propose a Complex-valued Mamba module that generalizes scalar decay to decay-rotation dynamics, realizing spatiotemporal sequence modeling jointly modulated by time intervals and magnetic phase priors.
    \item Extensive experiments on three real-world LBSN datasets demonstrate that Mag-Mamba achieves superior overall performance compared to state-of-the-art baselines.
\end{itemize}

\section{Related Work}
\label{Related_Work}
\subsection{Next POI Recommendation}
Next Point-of-Interest (POI) recommendation aims to explore transition patterns between interest points based on user check-in trajectories to predict the next visited location. Early research primarily formulated next location prediction as a sequence modeling problem, utilizing models such as LSTMs or RNNs \cite{feng2018deepmove,liu2016predicting,wang2021reinforced,wu2020personalized}. With the advancement of attention mechanisms, Transformer-based models were widely introduced, as the global re-weighting of historical check-ins enhanced the capability to model non-adjacent and discontinuous visits \cite{duan2023clsprec,luo2021stan,xue2021mobtcast,zhang2022next,xie2023hierarchical}. Representative works further injected geographical inductive biases into attention mechanisms to better incorporate spatial constraints. Concurrently, Graph Neural Networks (GNNs) capture high-order movement patterns beyond individual trajectories by constructing POI transition graphs \cite{li2021you,qin2023disenpoi,wang2022learning,wang2023adaptive,xu2023revisiting,yan2023spatio,wang2022modeling}. 

Recently, Mamba, based on Selective State Space Models (SSMs), has emerged as a new paradigm for sequence modeling due to its linear complexity and hardware-friendly parallelization capabilities. Chen et al. combined hierarchical geo-coding with Mamba to achieve geo-sequence awareness \cite{chen2024geomamba}; Qin et al. extended the GaPPO operator to the state space to model multi-granular spatiotemporal transitions \cite{qin2025geomamba}; Li et al. proposed a Mamba updating mechanism within the tangent space of hyperbolic geometry \cite{li2025gtr}; and TCFMamba combined Mamba with trajectory collaborative filtering to address data bias \cite{qian2025tcfmamba}. However, most of these methods model spatial directionality merely as auxiliary features and treat time as static embeddings, lacking a unified mechanism to explicitly model the asymmetry of spatiotemporal coupling.

\subsection{Directed Graph Modeling}
Real-world movement trajectories inherently form non-reciprocal directed graphs, presenting a challenge for traditional undirected spectral operators. Classical spectral graph theory laid the groundwork by establishing spectral tools for irreversible Markov chains \cite{chung2005laplacians}. Building on this, Hermitian adjacency matrices and Magnetic Laplacian operators encode undirected geometric structure in the magnitude, thereby enabling a principled spectral analysis of directed graphs \cite{guo2017hermitian, fanuel2018magnetic}. MagNet \cite{zhang2021magnet} instantiates this concept into a directed Graph Neural Network (GNN), where the phase explicitly represents directionality through complex Hermitian matrices derived from the Magnetic Laplacian. These directed spectral operators are well-suited for generating structure-aware geographical location encodings on directed graphs. Recent works have also explored alternative Hermitian constructions to achieve better denoising performance \cite{badea2025haar, pahng2024improving}. However, existing directed spectral methods are primarily developed for static directed graphs and rarely involve direction fields under temporal conditions. This work fills this gap by mapping time-conditioned geographical direction fields to magnetic phases.

\subsection{Complex-Valued Rotations}
Complex-valued representations have been repeatedly proven effective in modeling asymmetric relationships. In Knowledge Graph representation learning, ComplEx \cite{trouillon2016complex} uses complex bilinear scoring to model antisymmetric and non-commutative relational patterns, while RotatE \cite{sun2019rotate} explicitly represents relations as rotations to map head entities to tail entities, providing a simple yet expressive mechanism for asymmetric relations. With the development of representation learning, rotation mechanisms have extended from the complex plane to higher-dimensional spaces and non-Euclidean geometries \cite{zhang2019quaternion, feng2022role}. In POI recommendation, ROTAN \cite{feng2024rotan} employs a rotation-based temporal attention mechanism to capture asymmetric temporal influences across different time slots. However, it overlooks the interplay between geographic directionality and temporal dynamics. To address this limitation, we propose to unify the modeling of the coupling relationship between spatial directionality and temporal dynamics within the complex domain.

\section{Preliminaries}
\subsection{Problem Definition}
Let $\mathcal{U}$, $\mathcal{P}$, and $\mathcal{C}$ denote the user set, POI set, and category set, respectively. The number of POIs is $|\mathcal{P}|$, the number of users is $|\mathcal{U}|$, and the number of categories is $|\mathcal{C}|$. Each POI $p\in\mathcal{P}$ is associated with a category label $c_p\in\mathcal{C}$. A check-in record is defined as $s=(u,p,\tau,c_p)$, which indicates that user $u\in\mathcal{U}$ visited POI $p\in\mathcal{P}$ at timestamp $\tau$, with category information $c_p$.
 We represent user $u$'s check-in sequence as $S^{u}=\langle s_{1},s_{2},\ldots,s_{L} \rangle$, where $s_L$ is the $L$-th check-in and $L$ is the sequence length. Given $S^u$, the goal of next-POI recommendation is to predict the next POI that user $u$ is most likely to visit, denoted as $p_{L+1}$.

\subsection{Spatiotemporal Asymmetry}
Urban mobility flows often exhibit pronounced time-conditioned directionality. For any POI pair $i,j\in\mathcal{P}$, the transition probability can be asymmetric under different temporal contexts: $P(j\mid i,b)\neq P(i\mid j,b)$, where $b\in\{0,1,\ldots,N_b-1\}$ denotes the discretized time bin. In particular, mobility can show tidal patterns, e.g., transitions $i\!\to\! j$ dominate in the morning, while $j\!\to\! i$ dominate in the afternoon. However, geographic adjacency is typically modeled by an undirected graph (static and symmetric), which encodes static reachability but leaves edge orientation unspecified. We keep a fixed undirected connectivity $\mathbf{W}$ to capture static reachability, and inject time-conditioned directionality via complex phases. The resulting phase differences serve as explicit control inputs that drive the sequence dynamics in Mag-Mamba.

\section{The Proposed Model}
\label{sec:m}
\begin{figure*}
\captionsetup{skip=2pt,belowskip=-5pt}
 \centering
 \includegraphics[width=\textwidth]{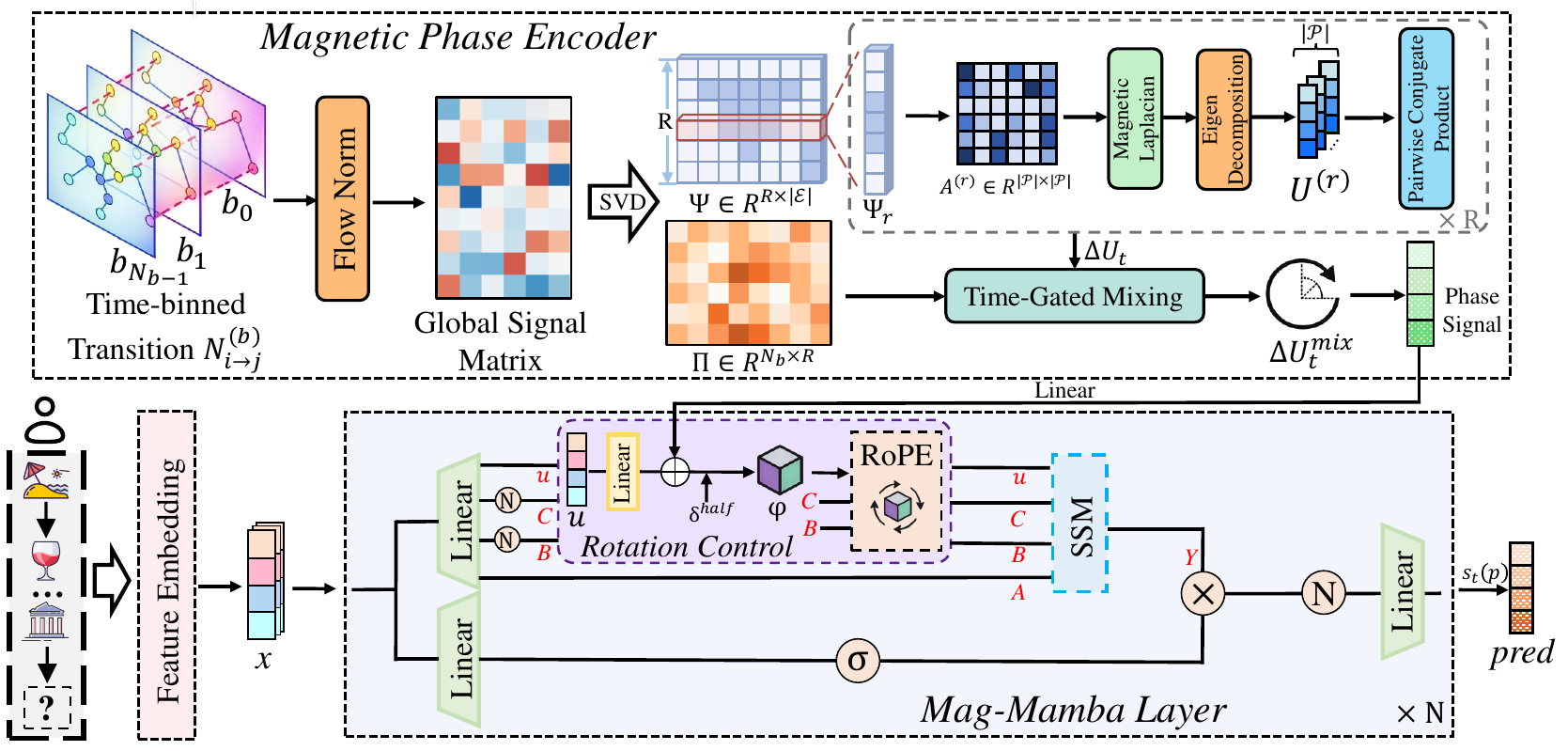}
\caption{Overview of the proposed Mag-Mamba framework, consisting of a Magnetic Phase Encoder, a context feature embedding module, and a Mag-Mamba layer.}
\label{Mag}
 \Description[none]{}
\end{figure*}
\subsection{Overview}
Figure~\ref{Mag} illustrates the proposed \textbf{Mag-Mamba} framework, consisting of three modules:
(i) a \textbf{Magnetic Phase Encoder} that reconstructs time-varying spectral directionality on the geographic graph via a low-rank spatiotemporal factorization;
(ii) a \textbf{context feature embedding} module that integrates user-specific preference representations and temporal rhythms;
and (iii) a \textbf{Mag-Mamba layer} that processes the resulting sequence with a state space model featuring complex-induced rotation dynamics, implemented as interleaved real 2D rotations whose angles are driven by graph-induced geometric phase differences. We detail these components next.

\subsection{Magnetic Phase Encoder}
This module transforms discrete trajectory statistics into time-conditioned phase signals while keeping the undirected geographic connectivity fixed. The phase signals will drive the rotation dynamics in Mag-Mamba.

\paragraph{Geographic graph construction.}
Following common practice in geographic encoding \cite{qin2023disenpoi}, we construct a radius-based adjacency graph $\mathcal{G}=(\mathcal{P},\mathcal{E})$ with POIs as nodes. An undirected edge $\{i,j\}\in\mathcal{E}$ is created if the geographic distance $d(i,j)$ is within a radius threshold $r$. We define symmetric geographic weights
\begin{equation}
\label{eq:geo_weight}
W_{ij}=\exp\!\left(-\frac{d(i,j)}{\sigma_{\mathrm{geo}}}\right),\quad
W_{ij}=W_{ji},\quad
W_{ii}=0,
\end{equation}
where $\sigma_{\mathrm{geo}}$ controls the decay scale and $W_{ij}=0$ if $\{i,j\}\notin\mathcal{E}$. This term captures static geographic reachability; directionality is entirely carried by the subsequent phase terms.

\paragraph{From transition counts to antisymmetric edge signals.}
We discretize time context into bins $b\in\{0,1,\ldots,N_b-1\}$. Let $N^{(b)}_{i\rightarrow j}$ denote the number of transitions from POI $i$ to POI $j$ observed in time bin $b$. We then quantify the directional asymmetry on each undirected edge within time bin $b$ by contrasting the two opposite transition counts. For each undirected edge $e=\{i,j\}\in\mathcal{E}$ (assume $i<j$), we construct a smoothed log-ratio and squash it into $[-1,1]$:
\begin{equation}
s^{(b)}_{e}
=
\tanh\!\left(
\frac{\log(N^{(b)}_{i\rightarrow j}+\alpha)-\log(N^{(b)}_{j\rightarrow i}+\alpha)}{\kappa}
\right)\in[-1,1],
\end{equation}
where $\alpha$ is a smoothing constant and $\kappa$ is a temperature parameter. The sign of $s^{(b)}_{e}$ indicates the dominant direction (positive: $i\!\to\! j$, negative: $j\!\to\! i$), and its magnitude measures directional strength. This construction yields opposite signs for the two directions on the same undirected edge, matching the Hermitian constraint used later. We then construct the global signal matrix $S\in\mathbb{R}^{N_b\times|\mathcal{E}|},\quad
S_{b,e}=s^{(b)}_{e}$ by stacking all bins and edges.

\paragraph{Low-rank spatiotemporal basis factorization.}
Maintaining an independent directional graph for every time bin is computationally prohibitive. We therefore compress the time-conditioned edge direction matrix
$S\in\mathbb{R}^{N_b\times|\mathcal{E}|}$ using a truncated SVD,
\begin{equation}
S = U\,\mathrm{diag}(\boldsymbol{\sigma})\,V^\top,
\end{equation}
where $U$ provides time-bin factors, $V^\top$ provides edge-wise spatial modes, and $\boldsymbol{\sigma}$ contains their energies. In implementation, we retain the top-$R$ components and fold the singular values into the time-bin factors, yielding two matrices $(\Pi,\Psi)$ such that
\begin{equation}
S \approx \Pi\Psi,\qquad
\Pi \equiv U\,\mathrm{diag}(\boldsymbol{\sigma}),\quad
\Psi \equiv V^\top,
\end{equation}
with $\Pi\in\mathbb{R}^{N_b\times R}$ as the (signed) time-bin mixing coefficients and $\Psi\in\mathbb{R}^{R\times|\mathcal{E}|}$ as the shared edge-direction bases. With this decomposition, time-varying directionality is represented as a signed linear mixture of a small set of shared spatial modes, enabling real-time phase feature construction while keeping all spectral computations as one-time precomputation. This discrete basis formulation is also statistically more stable under sparse check-ins than estimating a separate continuous-time directed graph.

\paragraph{Antisymmetric direction fields and Hermitian complex adjacency.}
For spectral stability, the magnetic Laplacian uses a Hermitian complex adjacency. For each basis $r$, we lift the edge coefficients $\Psi$ into an antisymmetric node-pair field $A^{(r)}\in\mathbb{R}^{|\mathcal{P}|\times |\mathcal{P}|}$. Let each undirected edge $e=\{u,v\} \in \mathcal{E}$ with $u<v$ be assigned a unique index $k$, and let $e_{ij}$ denote this index corresponding to the edge between $i$ and $j$.
\begin{equation}
A^{(r)}_{ij}=
\begin{cases}
\ \ \Psi_{r, e_{ij}}, & \{i,j\}\in\mathcal{E},\ i<j,\\
-\,\Psi_{r, e_{ij}}, & \{i,j\}\in\mathcal{E},\ i>j,\\
\ \ 0, & \text{otherwise}.
\end{cases}
\label{eq:main_A_def}
\end{equation}
This definition flips the sign when swapping $(i,j)$, hence $(A^{(r)})^\top=-A^{(r)}$.

This antisymmetry ensures $\Phi^{(r)}_{ji}=-\Phi^{(r)}_{ij}$ and yields a Hermitian complex adjacency, which is a necessary structural condition for the spectral decomposition to be well-posed and numerically stable.

We fix $q$ as a scalar magnetic charge (hyperparameter). Following MagNet \cite{zhang2021magnet}, we define the phase and the corresponding Hermitian adjacency as:
\begin{equation}
\Phi^{(r)}_{ij}=2\pi q\,A^{(r)}_{ij},\quad
H^{(r)}_{ij}=W_{ij}\exp(\mathrm{i}\Phi^{(r)}_{ij}),
\label{eq:Phi_H_r} 
\end{equation}
where $\mathrm{i}=\sqrt{-1}$. Since $\Psi=V^\top$ has orthonormal rows, $|A^{(r)}_{ij}|\le 1$, hence $|\Phi^{(r)}_{ij}|\le 2\pi q$. Let $D_g=\mathrm{diag}(d_1,\ldots,d_{|\mathcal{P}|})$ with $d_i=\sum_{j} W_{ij}$. The normalized magnetic Laplacian is
\begin{equation}
L^{(r)} = I_{|\mathcal{P}|} - D_g^{-1/2}H^{(r)}D_g^{-1/2}.
\label{eq:L_r}
\end{equation}
Since $H^{(r)}$ is Hermitian, $L^{(r)}$ is Hermitian positive semidefinite and admits an orthonormal eigen-decomposition. This construction guarantees that $L^{(r)}$ is Hermitian positive semidefinite (Lemma \ref{lem:app_L_psd}), ensuring a valid real spectrum for stable basis extraction. Let $V^{(r)}\in\mathbb{C}^{|\mathcal{P}|\times k}$ be the eigenvectors corresponding to the smallest $k$ eigenvalues. To remove instability from magnitude scaling and global phase ambiguity, we keep only elementwise phases and normalize them to unit-modulus phase tokens:
\begin{equation}
U^{(r)}(i,m)=\exp(\mathrm{i}\,\arg(V^{(r)}(i,m)))\in\mathbb{C},\quad
|U^{(r)}(i,m)|=1.
\label{eq:U_r} 
\end{equation}
This makes the embedding insensitive to numerical scaling and column-wise phase drift, improving stability of online phase difference computation.

\paragraph{Time-conditioned phase differences for each step.}
Denote the POI and timestamp in $s_t$ by $p_t$ and $\tau_t$, i.e., $s_t=(u,p_t,\tau_t,c_{p_t})$. We define the time gap as $\Delta t_t = \tau_t-\tau_{t-1}$ for $t \ge 2$ (with $\Delta t_1 = 0$).

We assign each step to a discrete time bin (hour-of-week). For a sequence step $t$ with current POI $p_t$, define the source POI
\begin{equation}
p^{\mathrm{src}}_t=
\begin{cases}
p_1, & t=1,\\
p_{t-1}, & t\ge 2.
\end{cases}
\end{equation}

Let $\mathbf{u}^{(r)}_p \in \mathbb{C}^k$ denote the row vector of $U^{(r)}$ corresponding to POI $p$. For each basis $r$, the phase-difference vector is
\begin{equation}
\Delta U^{(r)}_t=\mathbf{u}^{(r)}_{p_t} \odot \overline{\mathbf{u}^{(r)}_{p^{\mathrm{src}}_t}}
\in\mathbb{C}^{k},
\label{eq:DeltaU_r_t} 
\end{equation}
where $\odot$ is the elementwise product and $\overline{(\cdot)}$ denotes complex conjugation. Crucially, as proved in Proposition~\ref{prop:app_gauge_invariance}, the conjugate product form in Eq.~\ref{eq:DeltaU_r_t} makes each $\Delta U_t^{(r)}$ gauge-invariant by canceling the arbitrary column-wise phase ambiguity of eigenvectors. Since Eq.~\ref{eq:DeltaU_mix_t} is a linear combination of gauge-invariant vectors with real coefficients $\Pi_{b_t,r}$, $\Delta U_t^{\operatorname{mix}}$ remains gauge-invariant. Semantically, it encodes the geometric alignment between the user's movement and the underlying traffic flow, independent of the spectral coordinate system.

We mix bases using $\Pi$:
\begin{equation}
\Delta U_t^{\operatorname{mix}}=\sum_{r=1}^{R}\Pi_{b_t,r}\,\Delta U^{(r)}_t\in\mathbb{C}^{k}.
\label{eq:DeltaU_mix_t} 
\end{equation}
Here we don't normalize $\Delta U_t^{mix}$ after mixing, since its magnitude reflects the resultant directional strength under the energy-weighted mixture coefficients $\Pi_{b_t,r}$. A small magnitude indicates partial cancellation among competing bases, while a large magnitude indicates reinforcement from coherent bases. We treat this magnitude as an informative control signal for the strength of phase injection. Finally, we concatenate real and imaginary parts as the magnetic phase feature:
\begin{equation}
\label{eq:m_t}
\mathbf{m}_t= \operatorname{Concat}[\operatorname{Re}(\Delta U_t^{\operatorname{mix}}),\operatorname{Im}(\Delta U_t^{\operatorname{mix}})]\in\mathbb{R}^{2k}.
\end{equation}
Thus the time-conditioned directional field is compressed into a low-dimensional per-step signal $\mathbf{m}_t$.

As detailed in Lemma \ref{lem:app_offline_cost}, all spectral decompositions are one-time offline precomputations.

\subsection{Context Feature Embedding}
This module converts discrete context into a $D$-dimensional input sequence. At each step $t$, we use embeddings for POI and category, and incorporate lightweight context: a user embedding $\mathbf{e}_u\in\mathbb{R}^{D}$ and time features (discrete embeddings for hour-of-day and day-of-week, plus a projection of $\log(1+\Delta t_t)$). After concatenation, we apply a linear mapping with a gating function to compress into $D$ dimensions. The resulting backbone sequence $\mathbf{x}_t$ is then fed into the Mag-Mamba layer.

\subsection{Mag-Mamba Layer}
Mamba-3 \cite{anonymous2026mamba} aims to transform trajectory modeling from the similarity aggregation of discrete attention into a controllable continuous-time dynamical recurrence. Its core mechanism involves expressing direction-dependent state evolution via complex rotations, while utilizing time intervals to control memory decay and update magnitude. Building upon the Mamba-3 architecture, we hereby incorporate the magnetic phase feature $\mathbf{m}_t$ as an explicit control signal for these rotational dynamics. Our implementation operates in $\mathbb{R}^D$ as $D/2$ interleaved 2D rotation pairs, which is equivalent to complex multiplication.

Given the backbone input $\mathbf{x}_t\in\mathbb{R}^{D}$, we apply an input projection to obtain
\begin{equation}
    (\mathbf{u}_t,\ \mathbf{B}_t,\ \mathbf{C}_t) = \mathrm{Linear}(\mathbf{x}_t),
\end{equation}
where $\mathbf{u}_t$ is the token content used to parameterize the dynamics and $\mathbf{u}_t,\mathbf{B}_t,\mathbf{C}_t\in\mathbb{R}^{D}$. We map token content and magnetic phase to half dimensions to form angular velocity:
\begin{equation}
\label{eq:theta_t_all}
\begin{gathered}
\boldsymbol{\theta}^{\mathrm{tok}}_t = W_{\theta x}\mathbf{u}_t\in\mathbb{R}^{D/2},\quad\boldsymbol{\theta}^{\mathrm{mag}}_t = W_{\theta m}\mathbf{m}_t\in\mathbb{R}^{D/2}, \\
\boldsymbol{\theta}_t=\boldsymbol{\theta}^{\mathrm{tok}}_t+\boldsymbol{\theta}^{\mathrm{mag}}_t\in\mathbb{R}^{D/2},
\end{gathered}
\end{equation}
where $W_{\theta x}$ and $W_{\theta m}$ are learnable parameters.

Thus spatiotemporal asymmetry enters $\boldsymbol{\theta}_t$ through $\mathbf{m}_t$, determining the direction and magnitude of state rotations. From a dynamical systems perspective (see Appendix \ref{app:control_dynamics}), $\theta_t^{mag}$ acts as an additive topological forcing term on the state's natural oscillation frequency. It explicitly modulates the rotation based on the geometric 'flow' direction, distinct from the content-driven dynamics.

Simultaneously, we map the time gap $\Delta t_t$ to positive step-size vectors via a log transform followed by a Softplus:
\begin{equation}
\boldsymbol{\delta}^{\mathrm{half}}_t=\mathrm{softplus}(W_{\Delta}\,\log(1+\Delta t_t)+\mathbf{b}_{\Delta})\in\mathbb{R}^{D/2}.
\label{eq:delta_t}
\end{equation}
Here, $W_\Delta, \mathbf{b}_{\Delta}$ are learnable weight and bias parameters, respectively.

For consistency with the interleaved 2D rotation pairs, we replicate $\boldsymbol{\delta}^{\mathrm{half}}_t$ across each pair so that the two coordinates within the same rotation pair share an identical step size, yielding a $D$-dimensional step-size vector $\boldsymbol{\delta}_t$ used in the decay and gating terms.

Following Mamba-3~\cite{anonymous2026mamba}, we discretize the continuous-time diagonal SSM using the generalized trapezoidal rule, which yields a two-point update that mixes the previous and current projected inputs via a data-dependent coefficient $\boldsymbol{\lambda}_t$. This A-stable discretization yields the multiplicative decay $\boldsymbol{\alpha}_t$ and the time-scaled input injection terms used below.

Let $\boldsymbol{\rho}\in\mathbb{R}^{D}$ be a learnable diagonal log-parameter and set
\begin{equation}
\label{eq:main_A_P_def}
\mathbf{A}=-\exp(\boldsymbol{\rho})\in\mathbb{R}^{D}.
\end{equation}
We define the discrete decay as
\begin{equation}
\boldsymbol{\alpha}_t=\exp(\boldsymbol{\delta}_t\odot\mathbf{A})\in\mathbb{R}^{D}.
\label{eq:alpha_t}
\end{equation}
We further produce elementwise coefficients via gating:
\begin{equation}
\boldsymbol{\lambda}_t=\sigma(W_{\lambda}\mathbf{u}_t),\quad
\boldsymbol{\beta}_t=(1-\boldsymbol{\lambda}_t)\odot \boldsymbol{\delta}_t\odot \boldsymbol{\alpha}_t,\quad
\boldsymbol{\gamma}_t=\boldsymbol{\lambda}_t\odot \boldsymbol{\delta}_t,
\label{eq:lambda_beta_gamma}
\end{equation}
where $\sigma$ denotes sigmoid.

\paragraph{Rotation operator and state recurrence.}
We assume $D$ is even and interpret $\mathbb{R}^{D}$ as $D/2$ interleaved 2D rotation pairs.
We define the per-pair rotation angle
\begin{equation}
\boldsymbol{\varphi}_t={\boldsymbol{\delta}^{\mathrm{half}}_t}\odot \boldsymbol{\theta}_t\in\mathbb{R}^{D/2}.
\label{eq:phi_t} 
\end{equation}
Let $R_t(\cdot;\boldsymbol{\varphi}_t)$ denote the block-diagonal rotation parameterized by $\boldsymbol{\varphi}_t$. For brevity, we write $R_t(\cdot)\equiv R_t(\cdot;\boldsymbol{\varphi}_t)$ in the recurrence below. For any $\mathbf{v}\in\mathbb{R}^{D}$ and $d=1,\ldots,D/2$, it acts on the $d$-th interleaved pair as:
\begin{equation}
\left[R_t(\mathbf{v};\boldsymbol{\varphi}_t)\right]_{2d-1:2d}=
\begin{bmatrix}
\cos\varphi_{t,d} & -\sin\varphi_{t,d}\\
\sin\varphi_{t,d} & \cos\varphi_{t,d}
\end{bmatrix}
\left[\mathbf{v}\right]_{2d-1:2d}.
\label{eq:R_t_def}
\end{equation}

Let $\mathbf{h}_t\in\mathbb{R}^{D}$ be the hidden state with initialization $\mathbf{h}_0=\mathbf{0}$ and $\mathbf{u}_0=\mathbf{0}$, $\mathbf{B}_0=\mathbf{0}$. For $t=1,\ldots,L$, the recurrence is
\begin{equation}
\mathbf{h}_t=\boldsymbol{\alpha}_t\odot R_t(\mathbf{h}_{t-1})
+\boldsymbol{\beta}_t\odot R_t(\mathbf{B}_{t-1}\odot \mathbf{u}_{t-1})
+\boldsymbol{\gamma}_t\odot (\mathbf{B}_t\odot \mathbf{u}_t).
\label{eq:h_t} 
\end{equation}
The intermediate output is obtained by a readout gate:
\begin{equation}
\mathbf{y}_t=\mathbf{C}_t\odot \mathbf{h}_t.
\label{eq:y_t} 
\end{equation}
Compared with a standard single-input recurrence, the second term explicitly injects a rotated contribution driven by the previous step, enabling both smooth inertial continuation and rapid turning triggered by current input through the gated allocation of $(\boldsymbol{\beta}_t,\boldsymbol{\gamma}_t)$. After SiLU, output projection and residual normalization, we obtain the final sequence representation $\mathbf{Z}_t\in\mathbb{R}^{D}$.

While Eq.~(\ref{eq:h_t}) presents the rotation--decay recurrence for interpretability, we follow the Mamba-3 engineering implementation that realizes the same recurrence via its RoPE-based rotation kernel~\cite{anonymous2026mamba} to preserve parallel efficiency. Please refer to Mamba-3 for further details.

\subsection{Prediction and Loss}
Recommendation is performed by matching the sequence output with candidate POI representations. Let the embedding of POI $p$ be $\mathbf{e}_p\in\mathbb{R}^{D}$. Given $\mathbf{Z}_t\in\mathbb{R}^{D}$, the score for POI $p$ is
\begin{equation}
s_t(p)=\mathbf{Z}_t^\top \mathbf{e}_p.
\end{equation}
We train the model to predict the next POI at each step. Specifically, given $\mathbf{Z}_t$, we predict $p_{t+1}$ and minimize the softmax cross-entropy:
\begin{equation}
\mathcal{L}=-\sum_{t=1}^{L-1}\log\frac{\exp(s_t(p_{t+1}))}{\sum_{p'\in\mathcal{P}}\exp(s_t(p'))}.
\end{equation}

\section{Experiments and Results}
\label{sec:e}

\subsection{Dataset}
We evaluate our proposed model on three datasets collected from two real-world check-in platforms: Foursquare \cite{yang2014modeling} and Gowalla \cite{yuan2013time}. The Foursquare dataset includes two subsets, which are collected from New York City (NYC) in the USA and Tokyo (TKY) in Japan. The Gowalla dataset includes one subset collected from California and Nevada (CA). Their detailed statistics are in Table \ref{tab:data_summary}. The density is calculated as the total number of visits in the training set divided by (number of users × number of POIs), which is used to reflect the sparsity level between users and POIs. We remove POIs with fewer than five check-ins, segment each user's trajectory into sequences of length 3--101, and split the data into training, validation, and test sets in an 8:1:1 ratio.
\begin{table}[ht]
\captionsetup{aboveskip=0pt, belowskip=-3pt}
\centering
\caption{Statistics of the evaluated datasets.}
\renewcommand{\arraystretch}{1.2}
\begin{tabular}{lcccccc}
    \toprule
    & User & POI & Category & Traj & Check-in & Density \\
    \midrule
    \rowcolor{NYCband} NYC & 1,047 & 4,980 & 318 & 13,955 & 101,760 & 0.016 \\
    \rowcolor{TKYband} TKY & 2,281 & 7,832 & 290 & 65,914 & 403,148 & 0.018 \\
    \rowcolor{CAband}  CA  & 3,956 & 9,689 & 296 & 42,982 & 221,717 & 0.005 \\
    \bottomrule
\end{tabular}%
\label{tab:data_summary}
\end{table}

\subsection{Experiment Setting}
We implement our model in PyTorch on an NVIDIA RTX 4090 GPU. We optimize it with Adam using a learning rate of $1\times 10^{-3}$ and weight decay of $1\times 10^{-3}$; the batch size is 128 and we train for 50 epochs. The hidden size $D$ is 96, the time embedding size is 32, and we use 2 Mag-Mamba layers.

For the magnetic phase encoder, we set the graph radius to 1.5 km, $\alpha=1$, $\kappa=1$, and $\sigma_{\mathrm{geo}}=1$ km; the phase dimension is $k=16$. We use 168 hour-of-week bins and set the low-rank temporal basis size to $R=12$. The magnetic charge is $q=0.20$ for NYC and TKY, and $q=0.15$ for CA due to the varying graph densities across datasets. We report NDCG@$k$ and MRR with $k\in\{1,5,10\}$, and compute all transition statistics and factorizations on the training split only.

\begin{table*}[t]
\captionsetup{aboveskip=0pt, belowskip=-1pt}
\centering
\caption{Overall performance comparison with baseline models on three datasets.
\textbf{Top\mbox{-}1},\; \textcolor{Top2Gray}{\underline{Top\mbox{-}2}}.}
\setlength{\tabcolsep}{2.5pt} 
\renewcommand{\arraystretch}{1.15}
\resizebox{\textwidth}{!}{%
    \begin{tabular}{l *{4}{Y} *{4}{T} *{4}{C}}
    \toprule

    \multirow{2.5}{*}{\textbf{Method}} &
    \multicolumn{4}{c}{\cellcolor{NYCband}\textbf{NYC}} &
    \multicolumn{4}{c}{\cellcolor{TKYband}\textbf{TKY}} & 
    \multicolumn{4}{c}{\cellcolor{CAband}\textbf{CA}} \\ 
    \cmidrule(lr){2-5}\cmidrule(lr){6-9}\cmidrule(lr){10-13}
    & \hNYC{ND@1} & \hNYC{ND@5} & \hNYC{ND@10} & \hNYC{MRR}
    & \hTKY{ND@1} & \hTKY{ND@5} & \hTKY{ND@10} & \hTKY{MRR}
    & \hCA{ND@1}  & \hCA{ND@5}  & \hCA{ND@10}  & \hCA{MRR} \\
    \midrule

    LSTM         & 0.1306 & 0.2336 & 0.2585 & 0.2259
                 & 0.1110 & 0.2233 & 0.2496 & 0.1952
                 & 0.0864 & 0.1459 & 0.1711 & 0.1554 \\
    PLSPL        & 0.1601 & 0.3048 & 0.3336 & 0.2849
                 & 0.1495 & 0.2831 & 0.3143 & 0.2642
                 & 0.1084 & 0.1759 & 0.2029 & 0.1678 \\
    HME          & 0.1619 & 0.2806 & 0.3226 & 0.2787
                 & 0.1535 & 0.2637 & 0.2924 & 0.2366
                 & 0.1181 & 0.1886 & 0.2232 & 0.1945 \\
    GETNext      & 0.2244 & 0.3736 & 0.4046 & 0.3472
                 & 0.1767 & 0.3072 & 0.3297 & 0.2934
                 & 0.1342 & 0.2188 & 0.2468 & 0.2121 \\
    AGRAN        & 0.2202 & 0.3638 & 0.3792 & 0.3343
                 & 0.1755 & 0.2989 & 0.3261 & 0.2879
                 & 0.1329 & 0.2121 & 0.2331 & 0.2043 \\
    ROTAN        & 0.2192 & 0.3652 & 0.3944 & 0.3468
                 & 0.1822 & 0.3165 & 0.3467 & 0.3186
                 & 0.1345 & 0.2173 & 0.2564 & 0.2256 \\
    MCLP         & \second{0.2404} & 0.3674 & 0.3973 & 0.3507
                 & 0.1662 & 0.3110 & 0.3415 & 0.3199
                 & 0.1324 & 0.1914 & 0.2121 & 0.1895 \\
    MTNet        & 0.2346 & 0.3689 & 0.4013 & \second{0.3522}
                 & \second{0.2107} & \second{0.3345} & 0.3612 & 0.3191
                 & \second{0.1401} & \second{0.2346} & 0.2617 & \second{0.2387} \\
    GeoMamba     & 0.1988 & 0.3392 & 0.3506 & 0.3246
                 & 0.1851 & 0.2953 & 0.3205 & 0.2858
                 & 0.1256 & 0.2029 & 0.2215 & 0.1962 \\
    HMST         & 0.2138 & \second{0.3747} & \second{0.4063} & 0.3482
                 & 0.1925 & 0.3325 & \second{0.3690} & \second{0.3257}
                 & 0.1356 & 0.2325 & \second{0.2680} & 0.2300 \\
    HVGAE        & 0.2271 & 0.3651 & 0.3982 & 0.3470
                 & 0.1977 & 0.3167 & 0.3455 & 0.3180
                 & 0.1391 & 0.2325 & 0.2658 & 0.2367 \\

    \midrule
    \cellcolor{black!4}\textbf{Mag-Mamba} &
    \best{0.2451} & \best{0.3823} & \best{0.4149} & \best{0.3603}
    & \best{0.2417} & \best{0.3602} & \best{0.3889} & \best{0.3455}
    & \best{0.1595} & \best{0.2528} & \best{0.2761} & \best{0.2462} \\
    \rowcolor{black!2}
    \textit{improv.} &
    \multicolumn{1}{c}{\up{+1.96\%}} & \multicolumn{1}{c}{\up{+2.03\%}} & \multicolumn{1}{c}{\up{+2.12\%}} & \multicolumn{1}{c}{\up{+2.30\%}}
    & \multicolumn{1}{c}{\up{+14.71\%}} & \multicolumn{1}{c}{\up{+7.68\%}} & \multicolumn{1}{c}{\up{+5.39\%}} & \multicolumn{1}{c}{\up{+6.08\%}}
    & \multicolumn{1}{c}{\up{+13.85\%}} & \multicolumn{1}{c}{\up{+7.76\%}} & \multicolumn{1}{c}{\up{+3.02\%}} & \multicolumn{1}{c}{\up{+3.14\%}} \\
    \bottomrule
    \end{tabular}
}
\label{tab:performance_metrics}
\end{table*}
\subsection{Baseline Model}
We compare Mag-Mamba against 11 baselines:
\textbf{LSTM}~\cite{hochreiter1997long} and \textbf{PLSPL}~\cite{wu2020personalized}, which utilize RNNs and attention for sequential preference learning;
\textbf{HME}~\cite{feng2020hme}, embedding check-ins into a Poincaré ball;
\textbf{GETNext}~\cite{yang2022getnext}, which incorporates global trajectory flow maps;
\textbf{AGRAN}~\cite{wang2023adaptive}, employing adaptive graph structure learning;
\textbf{ROTAN}~\cite{feng2024rotan}, employing a Time2Rotation technique to encode temporal information as rotations;
\textbf{MCLP}~\cite{sun2024going}, utilizing topic modeling and multi-head attention;
\textbf{MTNet}~\cite{huang2024learning}, constructing a hierarchical Mobility Tree to capture multi-granularity time slot preferences;
\textbf{GeoMamba}~\cite{chen2024geomamba}, leveraging Mamba's linear complexity with a hierarchical geography encoder;
\textbf{HMST}~\cite{qiao2025hyperbolic}, employing hyperbolic rotations to jointly model hierarchical structures and multi-semantic transitions;
\textbf{HVGAE}~\cite{liu2025hyperbolic}, integrating a Hyperbolic GCN and Variational Graph Auto-Encoder with Rotary Position Mamba to capture hierarchical POI relationships.
\subsection{Performance Comparison With Baselines}
We first compare our model with 11 baseline models (Table \ref{tab:performance_metrics}). Our model consistently outperforms all 11 baselines across three datasets, demonstrating robustness in capturing spatiotemporal patterns.

Among the baselines, Transformer-based methods (GETNext, MCLP) outperform traditional models (LSTM, PLSPL, HME). This is attributed to their multi-head attention mechanisms, which effectively capture sequence patterns and multi-contextual information, significantly enhancing the modeling of complex sequential dependencies. AGRAN, via its adaptive graph structure, captures geographic dependencies more effectively than traditional GCNs, yielding competitive results. Regarding Mamba-based approaches, GeoMamba maintains a balance between efficiency and accuracy, benefiting from its geographic encoding module and the efficient state update of the Mamba model. HVGAE and HMST achieve improvements by modeling deep POI relationships via hyperbolic graph convolutions and variational inference, respectively. However, while both methods utilize "rotation" mechanisms, their rotations are primarily for embedding alignment rather than explicitly characterizing spatiotemporal asymmetry. Similarly, while ROTAN effectively models rotation in the temporal dimension, it neglects the topological constraints of geographic space.

Overall, Mag-Mamba leverages complex-valued phase rotation to model spatiotemporal asymmetry in a unified manner, consistently outperforming all baselines across three datasets. This comprehensive superiority highlights the effectiveness of coupling magnetic geometric priors with complex-valued dynamics for next POI recommendation.

\begin{table}[ht]
\captionsetup{aboveskip=0pt, belowskip=-3pt}
\centering
\caption{Performance comparison of model variants.}
\setlength{\tabcolsep}{4pt}
\renewcommand{\arraystretch}{1.18}
\small
\resizebox{0.48\textwidth}{!}{
\begin{tabular}{l *{3}{T} *{3}{C}}
\toprule
\multirow{2.5}{*}{\textbf{Model}} &
\multicolumn{3}{c}{\cellcolor{TKYband}\textbf{TKY}} &
\multicolumn{3}{c}{\cellcolor{CAband}\textbf{CA}} \\
\cmidrule(lr){2-4}\cmidrule(lr){5-7}
& \hTKY{ND@5} & \hTKY{ND@10} & \hTKY{MRR}
& \hCA{ND@5}  & \hCA{ND@10}  & \hCA{MRR} \\
\midrule
w/o Mamba \rmv & 0.3096 & 0.3335 & 0.2862 & 0.2046 & 0.2267 & 0.1998 \\
w/o Spatial  \rmv & 0.3314 & 0.3560 & 0.3117 & 0.2122 & 0.2348 & 0.2085 \\
w/o Phase  \rmv & 0.3467 & 0.3728 & 0.3248 & 0.2340 & 0.2588 & 0.2295 \\
Real-Mamba \rmv & 0.3550 & 0.3834 & 0.3405 & 0.2398 & 0.2631 & 0.2359 \\
MLP-Eigen \rmv & 0.3491 & 0.3743 & 0.3279 & 0.2233 & 0.2575 & 0.2290 \\
w/o TC \rmv & 0.3525 & 0.3763 & 0.3372 & 0.2443 & 0.2672 & 0.2396 \\
\midrule
\cellcolor{FullRow}\textbf{Full model} &
\bestab{0.3602} & \bestab{0.3889} & \bestab{0.3455} &
\bestab{0.2528} & \bestab{0.2761} & \bestab{0.2462} \\
\bottomrule
\end{tabular}
}
\label{ablation_results}
\end{table}

\subsection{Ablation Study}
We conduct a comprehensive ablation study with six variants:
(1) \textbf{w/o Mamba} replaces the Mamba with an MLP to test the necessity of sequential modeling;
(2) \textbf{w/o Spatial} removes the geographic Magnetic Laplacian encoder to verify the impact of geographic priors;
(3) \textbf{w/o Phase} sets all phases to zero, i.e., $\Phi^{(r)}_{ij}=0$ so that $H^{(r)}_{ij}=W_{ij}$, and the resulting $m_t$ contains no directional signal;
(4) \textbf{Real-Mamba} replaces the Complex Mamba with a standard real-valued Mamba to validate the necessity of complex-domain rotational dynamics;
(5) \textbf{w/o TC (Time-Cond)} removes the time-dependent modulation mechanism from the Magnetic Laplacian to examine the model's ability to capture dynamic tidal patterns and (6) \textbf{MLP-Eigen} substitutes the Magnetic Laplacian's eigenvectors with MLP-generated features to validate the necessity of explicit global spectral information.

The results are reported in Table \ref{ablation_results} (ND@5, ND@10, and MRR). Using the full model as the baseline, we observe the following: First, the w/o Mamba variant suffers the most significant performance degradation, confirming that sequential transition patterns in user trajectories are fundamental to personalized recommendation. Second, the substantial drop in w/o Spatial validates that geographic encoding serves as indispensable context for POI recommendation. A critical finding is the performance decay in w/o Phase. Removing phases collapses the model into an isotropic undirected graph, losing the spectral signature required to distinguish reciprocal flows (e.g., commuting vs. returning). This proves that simple undirected proximity is insufficient to describe complex urban mobility. Furthermore, Real-Mamba underperforms because it lacks the rotational operators native to the complex domain, preventing the effective evolution of phase features. This emphasizes that "complex features" must be coupled with "complex dynamics" to maximize utility. Additionally, the decline in MLP-Eigen indicates that global topological consistency requires explicit spectral derivation and is difficult to be approximated by local MLPs. Finally, w/o Time-Cond demonstrates that lacking time perception prevents the model from capturing topological reversals during rush hours, thereby limiting the upper bound of tidal pattern prediction.
\subsection{Sensitivity Analysis}

\begin{figure}
\captionsetup{skip=3pt,belowskip=-15pt}
\centering
\includegraphics[width=\linewidth]{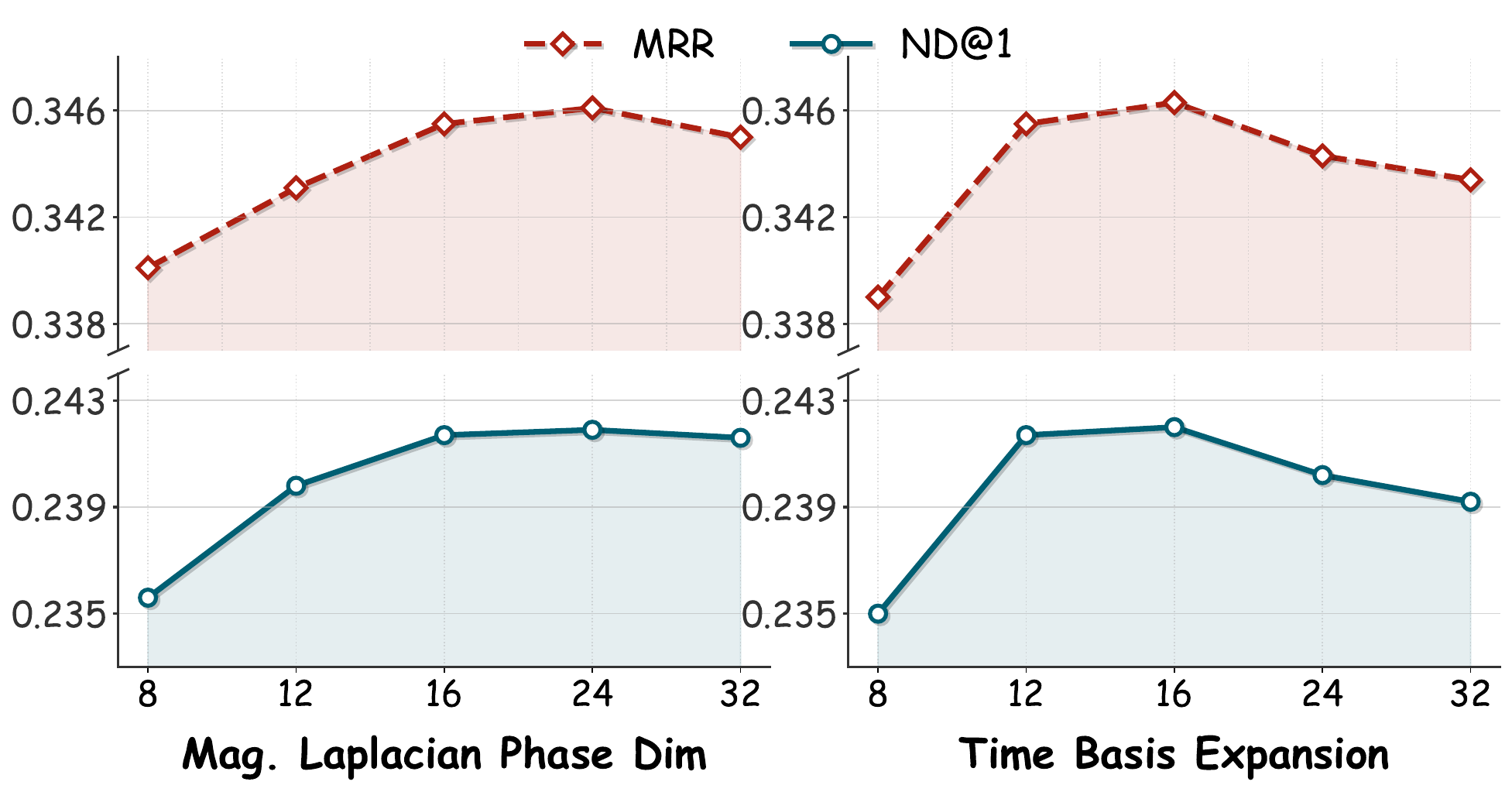}
\caption{Sensitivity analysis of the magnetic phase dimension (Left) and the low-rank temporal bases (Right) on the TKY dataset.}
\label{sen1}
\end{figure}

\begin{figure}
\captionsetup{skip=3pt,belowskip=-15pt}
\centering
\includegraphics[width=\linewidth]{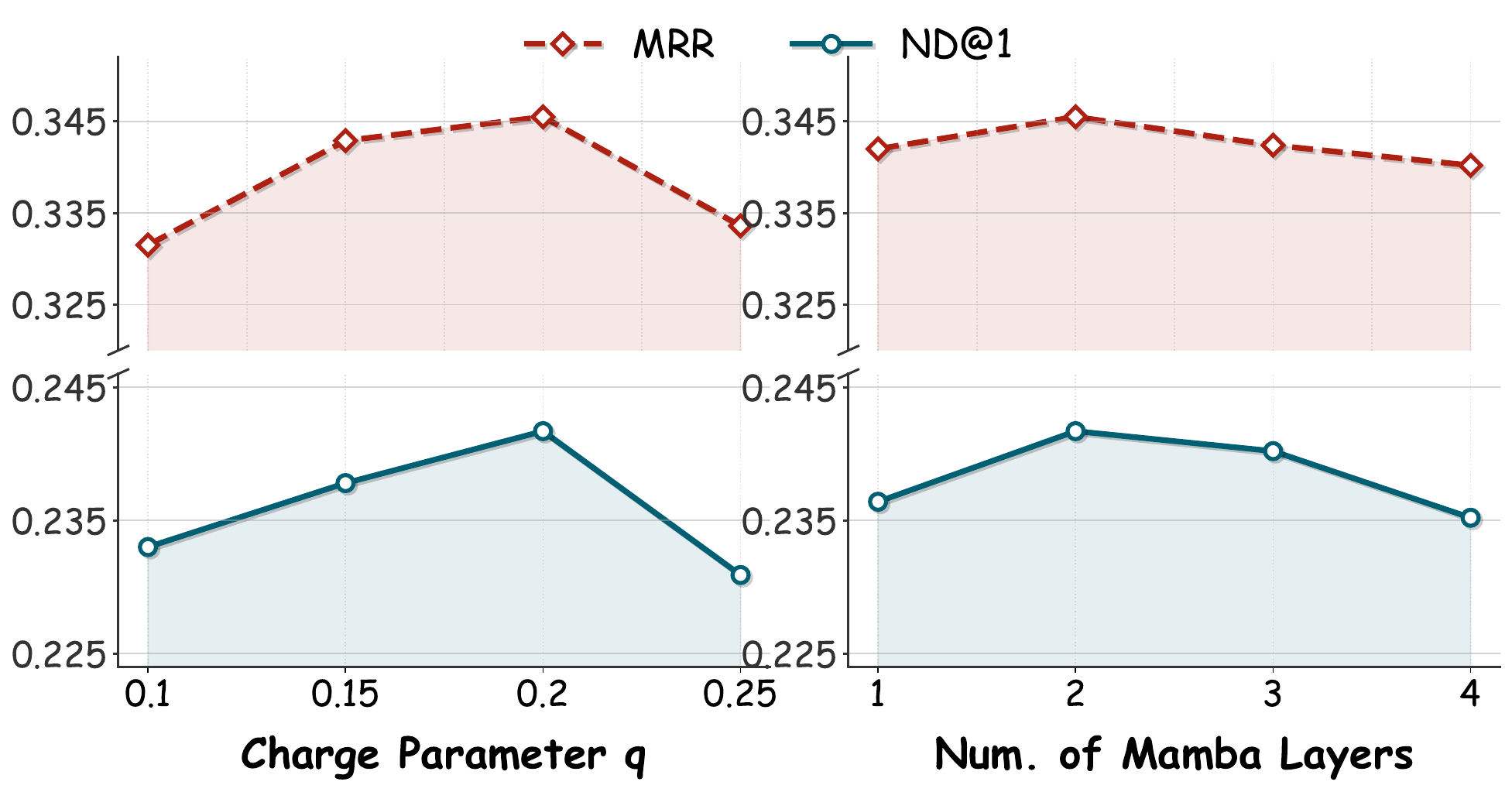}
\caption{Sensitivity analysis of the magnetic charge parameter $q$ (Left) and the Mamba layers (Right) on the TKY dataset.}
\label{sen2}
\end{figure}

We evaluate the impact of key hyperparameters on the TKY dataset using ND@1 and MRR. Results for the Magnetic Laplacian phase dimension and low-rank temporal bases are shown in Figure \ref{sen1}; results for the charge parameter $q$ and Mamba layers are shown in Figure \ref{sen2}.
\paragraph{Magnetic Phase Embedding Dimension}
This dimension determines the capacity of the spectral space. We tested candidate values $\{8, 12, 16, 24, 32\}$. As shown in Figure \ref{sen1} left, performance improves with dimension size but yields only marginal gains beyond 16, while training time and memory usage increase significantly. Therefore, we set the dimension to 16.
\paragraph{Number of Low-Rank Temporal Bases ($R$)}
The number of bases determines the model's coverage of underlying mobility patterns. Increasing from 4 to 12 yields distinct gains, while the performance saturates from 12 to 16. This suggests that urban mobility patterns can be decomposed into a finite set of principal components. Too few bases lead to underfitting, while too many yield diminishing returns and potential overfitting.
\paragraph{Magnetic Charge Parameter ($q$)}
The parameter $q$ controls the strength of directionality modeling. The results show a unimodal distribution. When $q$ is too small, the model degenerates to an approximation of an undirected graph with insufficient directional information. Conversely, when $q$ is too large, it may introduce unstable directed noise (phase aliasing). The optimal value is observed at 0.2 for the TKY dataset.
\paragraph{Number of Mamba Layers}
We explored model performance with layers $\{1, 2, 3, 4\}$. As the layer count increases, the model's expressive power improves, peaking at 2 layers. However, beyond 2 layers, performance begins to decline due to over-smoothing. Thus, we select 2 as the optimal number of Mamba layers.

\subsection{Performance on Asymmetric Subgroups}
\begin{figure}
\captionsetup{skip=3pt,belowskip=-10pt}
\centering
\includegraphics[width=\linewidth]{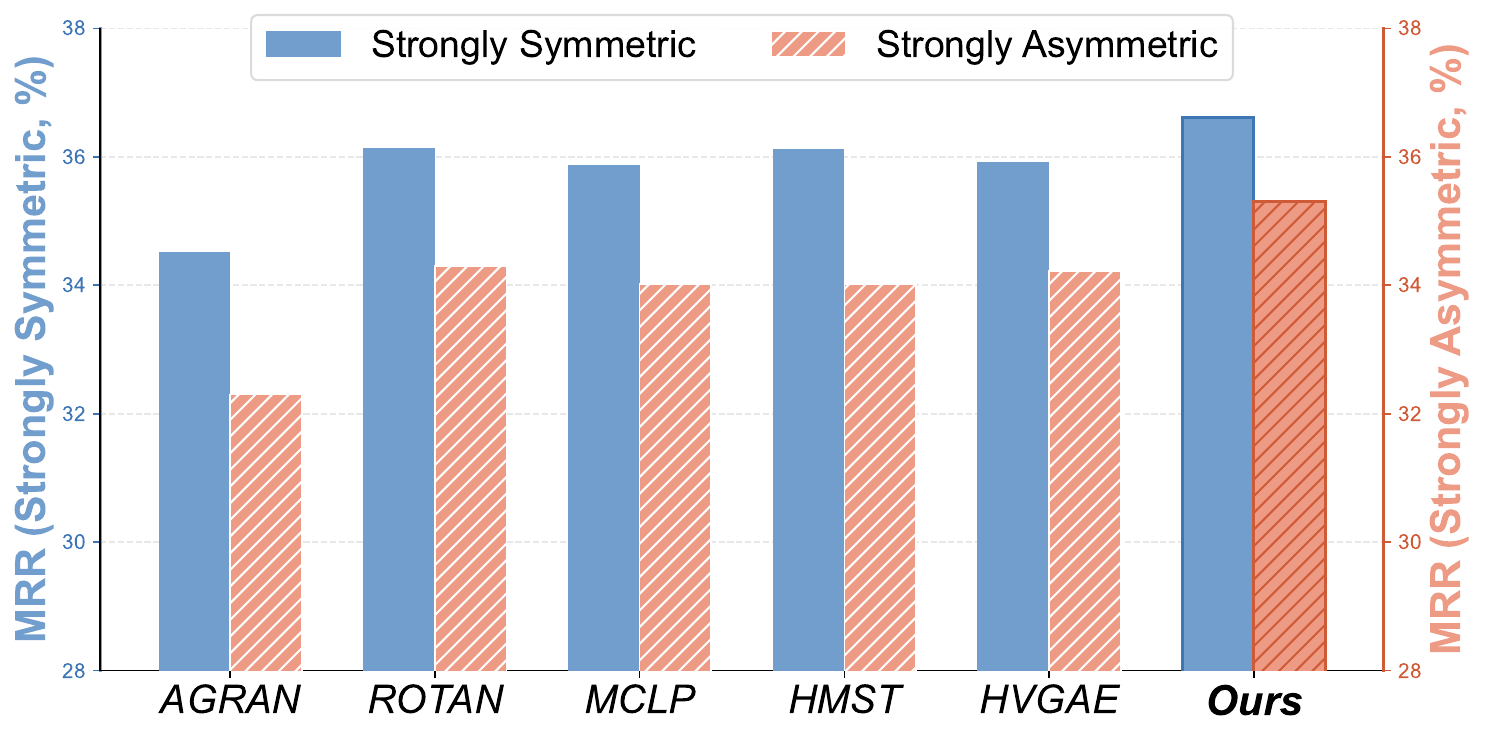}
\caption{Performance comparison (MRR) between Strongly Symmetric and Strongly Asymmetric subgroups on the TKY dataset.}
\label{asym_group}
\end{figure}

To verify the adaptability of Mag-Mamba to asymmetric scenarios, we designed an asymmetry-stratified evaluation on the TKY dataset. We define the asymmetry coefficient for a POI pair $(i, j)$ as:
$\mathrm{Asy}(i,j)=\frac{1}{N_b}\sum_{b=0}^{N_b-1}\left|P(j\mid i,b)-P(i\mid j,b)\right|.$
To reduce noise from sparse transitions, we compute $\mathrm{Asy}(i,j)$ only for POI pairs whose total bidirectional transition count exceeds a minimum threshold ($T_{min}=20$). The test samples are sorted by this coefficient and divided into tertiles: the lowest third forms the Strongly Symmetric Group, and the highest third forms the Strongly Asymmetric Group. We then compare the performance (MRR) of Mag-Mamba against baselines on these two groups.

The results are illustrated in Figure \ref{asym_group}. We observe that all methods, including baselines, exhibit lower absolute metrics in the Strongly Asymmetric Group compared to the Strongly Symmetric Group, confirming that asymmetric flows (e.g., tidal commuting) present a higher reasoning difficulty. However, Mag-Mamba exhibits a significantly smaller performance drop in the asymmetric group compared to baselines. This demonstrates its superior robustness in capturing irregular mobility flows and time-conditioned directional preferences. This result strongly validates Mag-Mamba's advantage in handling high-asymmetry scenarios, successfully transforming the challenge of asymmetry into a model feature.
\subsection{Case Study}

\begin{figure}
\captionsetup{skip=3pt,belowskip=-10pt}
\centering
\includegraphics[width=\linewidth]{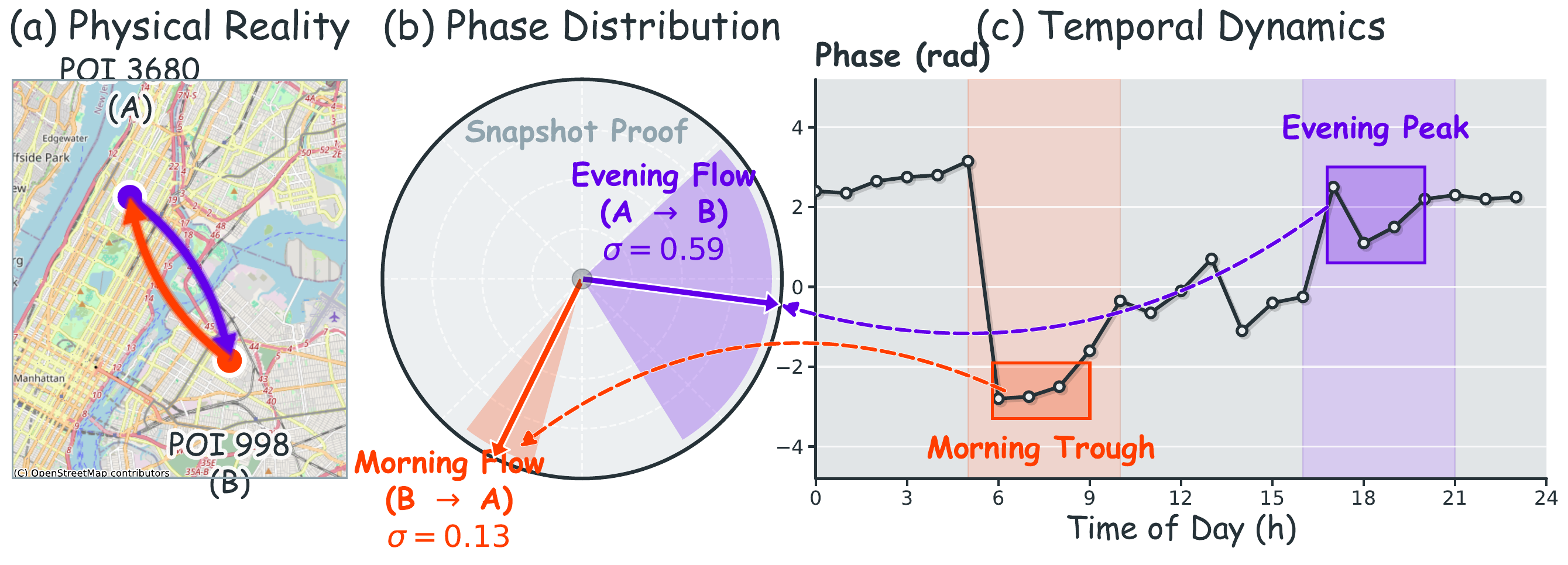}
\caption{Case study on tidal commuting in NYC. (Left) Spatial trajectories; (Middle) Morning vs. Evening phase vectors; (Right) 24-hour phase angle evolution.}
\label{case}
\end{figure}

We selected a sample with a high asymmetry index from the NYC dataset for visual analysis. This sample corresponds to a typical functional zone transition (commute): daytime movement flows from a residential area to a business district, while nighttime movement returns from the office to the residence. As shown in Figure \ref{case} (Left), the two paths are geographically proximal but directionally opposite.

Further observation reveals that these distinct morning and evening flow preferences correspond to different phase differences, creating time-dependent directional driving forces on the same geographic edge (Figure \ref{case} Middle). Additionally, by tracking the phase angle variation of this high-frequency transition edge over time, we observe a systematic shift in the phase angle (Figure \ref{case} Right). This shift enables the model to assign distinct directional updates to the same geographic connection during morning and evening rush hours, thereby accurately distinguishing between commuting and returning behaviors.

\section{Conclusion}
In this paper, we propose Mag-Mamba, a novel framework that effectively addresses the challenge of spatiotemporal asymmetry in next POI recommendation. Unlike existing methods that rely on symmetric operators in the real domain, we present the modeling of directional mobility flows as phase-driven rotational dynamics in the complex domain. Specifically, by introducing a Time-conditioned Magnetic Phase Encoder, we encode the anisotropic graph topology into complex phase features. Subsequently, we design a Complex-valued Mamba to dynamically evolve these features, enabling the model to capture complex spatiotemporal patterns such as flow reversals and tidal commuting. Extensive experiments on three real-world datasets demonstrate that Mag-Mamba consistently outperforms state-of-the-art baselines. Future work will explore extending this complex-domain modeling paradigm to other spatiotemporal tasks, such as traffic flow forecasting.

\bibliographystyle{ACM-Reference-Format}
\bibliography{sample-base}

\appendix
\setcounter{equation}{0}
\setcounter{figure}{0}
\setcounter{table}{0}
\renewcommand{\theequation}{A\arabic{equation}}
\renewcommand{\thefigure}{A\arabic{figure}}
\renewcommand{\thetable}{A\arabic{table}}
\section*{Notation (Quick Reference)}
\label{app:notation}
\addcontentsline{toc}{section}{Notation (Quick Reference)}
\vspace{-4pt}

\begin{small}
\setlength{\tabcolsep}{4pt}
\renewcommand{\arraystretch}{1.08}
\begin{tabular}{p{0.15\linewidth}p{0.78\linewidth}}
\toprule
\textbf{Symbol} & \textbf{Meaning} \\
\midrule

$\mathcal{U}$ & User set. \\
$\mathcal{P}$ & POI set. \\
$L$ & Sequence length. \\
$p_t$ & POI at step $t$. \\
$\tau_t$ & Timestamp at step $t$. \\
$\Delta t_t$ & Time gap: $\Delta t_1=0$, $\Delta t_t=\tau_t-\tau_{t-1}$ for $t\ge2$. \\
$N_b$ & Number of discrete time bins. \\
$b_t$ & Time-bin index at step $t$. \\

$\mathcal{G}$ & Geographic graph. \\
$W_{ij}$ & Symmetric geographic weight on $\{i,j\}$. \\

$R$ & Rank, number of spatiotemporal direction bases. \\
$\Pi$ & Time-bin mixing coefficients, $\Pi\in\mathbb{R}^{N_b\times R}$. \\
$\Psi$ & Edge basis coefficients, $\Psi\in\mathbb{R}^{R\times|\mathcal{E}|}$. \\
$q$ & Magnetic charge. \\
$H^{(r)}$ & Hermitian complex adjacency, $H^{(r)}_{ij}=W_{ij}\exp(i\Phi^{(r)}_{ij})$. \\
$L^{(r)}$ & Normalized magnetic Laplacian. \\

$k$ & Number of kept eigenvectors per basis. \\
$U^{(r)}$ & Unit-modulus phase tokens from eigenvector phases, $U^{(r)}\in\mathbb{C}^{|\mathcal{P}|\times k}$. \\
$m_t$ & Magnetic phase feature input to Mamba, $m_t\in\mathbb{R}^{2k}$. \\

$D$ & Hidden size of Mag-Mamba. \\
$x_t$ & Backbone input at step $t$, $x_t\in\mathbb{R}^{D}$. \\

${\boldsymbol{\delta}^{\mathrm{half}}_t}$ & Positive step-size for decay, ${\boldsymbol{\delta}^{\mathrm{half}}_t}=\mathrm{softplus}(W_\Delta\log(1+\Delta t_t)+b_\Delta)$. \\

$\boldsymbol{\theta}_t$ & Total angular velocity, $\boldsymbol{\theta}_t=\boldsymbol{\theta}^{\mathrm{tok}}_t+\boldsymbol{\theta}^{\mathrm{mag}}_t\in\mathbb{R}^{D/2}$. \\
$\boldsymbol{\varphi}_t$ & Rotation angles, $\boldsymbol{\varphi}_t={\boldsymbol{\delta}^{\mathrm{half}}_t\odot \boldsymbol{\theta}_t}\in\mathbb{R}^{D/2}$. \\
$R_t(\cdot)$ & Block-diagonal 2D rotation operator parameterized by $\boldsymbol{\varphi}_t$. \\

$h_t$ & Hidden state, $h_t\in\mathbb{R}^{D}$. \\
$Z_t$ & Final representation used for prediction, $Z_t\in\mathbb{R}^{D}$. \\

\bottomrule
\end{tabular}
\end{small}
\vspace{6pt}

\makeatletter
\@ifundefined{lemma}{\newtheorem{lemma}{Lemma}}{}
\@ifundefined{proposition}{\newtheorem{proposition}{Proposition}}{}
\@ifundefined{corollary}{\newtheorem{corollary}{Corollary}}{}
\@ifundefined{remark}{\newtheorem{remark}{Remark}}{}
\makeatother

\section{Efficiency Analysis}

Figure~\ref{eff} reports end-to-end inference latency and throughput as trajectory length $L$ increases.
GeoMamba is the fastest, while Mag-Mamba is consistently the second-fastest and maintains low tail latency. In contrast, AGRAN shows a clear latency blow-up, especially in p95/p99, and other graph-based baselines are generally slower. Throughput decreases in trajectories/s as $L$ grows, while steps/s is stable or slightly increases.

Overall, the Magnetic Phase Encoder adds only modest overhead and preserves efficient scaling.

\begin{figure*}
\captionsetup{skip=2pt,belowskip=-5pt}
 \centering
 \includegraphics[width=\textwidth]{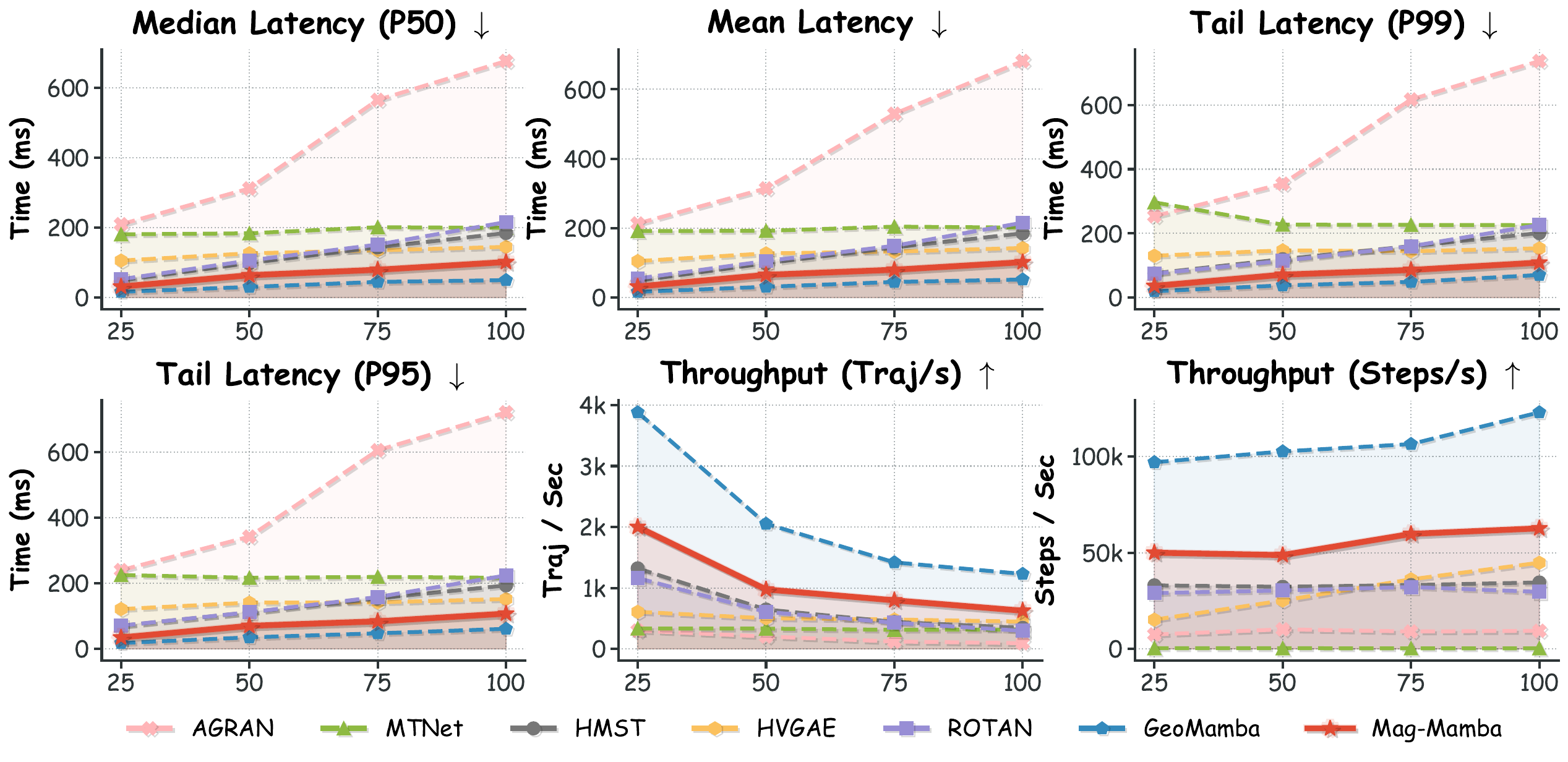}
\caption{\textbf{Efficiency benchmark.} End-to-end inference latency (mean/p50/p95/p99, lower is better) and throughput (trajectories/s and steps/s, higher is better) versus trajectory length $L\in\{25,50,75,100\}$ on a single GPU with FP16 AMP. We use batch size $B{=}128$ with 20 warmup iterations and report statistics over 200 timed iterations. Latency is measured per batch in milliseconds.}
\label{eff}
 \Description[none]{}
\end{figure*}

\section{Additional Proofs and Analyses}
\label{app:proofs}

\subsection{Preliminaries: Dynamics Discretization (Recap)}
\label{app:recap_discretization}
Following Mamba-3~\cite{anonymous2026mamba}, we adopt the generalized trapezoidal discretization for a diagonal continuous-time SSM.
For completeness, we restate the resulting coefficients used in the Mag-Mamba update.

Let $\boldsymbol{\rho}\in\mathbb{R}^{D}$ be a learnable diagonal log-parameter and set the transformation matrix $\mathbf{A}$ as defined in the main text (see Equation~\ref{eq:main_A_P_def}):
\begin{equation}
\mathbf{A} = -\exp(\boldsymbol{\rho}) \in \mathbb{R}^{D}.
\end{equation}
Given the time gap $\Delta t_t$, we map it to a positive step-size vector (half-dimensional, $D/2$) via a log transform followed by a Softplus:
\begin{equation}
\boldsymbol{\delta}^{\mathrm{half}}_t = \mathrm{softplus}(W_{\Delta}\,\log(1+\Delta t_t)+\mathbf{b}_{\Delta})\in\mathbb{R}^{D/2}
\end{equation}
Here, $W_{\Delta}, \mathbf{b}_{\Delta}$ are learnable weight and bias parameters, respectively.

We replicate the $D/2$-dimensional $\boldsymbol{\delta}^{\mathrm{half}}_t$ across each rotation pair to yield the $D$-dimensional $\boldsymbol{\delta}_t$, which is used for both decay and rotation.

The multiplicative decay is
\begin{equation}
\label{eq:app_alpha_def}
\boldsymbol{\alpha}_t=\exp(\boldsymbol{\delta}_t\odot\mathbf{A})\in\mathbb{R}^{D}.
\end{equation}
Let $\mathbf{u}_t$ denote the token content driving the gates; then
\begin{equation}
\label{eq:app_lambda_beta_gamma}
\begin{gathered}
\boldsymbol{\lambda}_t = \sigma(W_{\lambda}\mathbf{u}_t),\\
\boldsymbol{\beta}_t = (1-\boldsymbol{\lambda}_t)\odot \boldsymbol{\delta}_t\odot \boldsymbol{\alpha}_t,\quad
\boldsymbol{\gamma}_t=\boldsymbol{\lambda}_t\odot \boldsymbol{\delta}_t.
\end{gathered}
\end{equation}
The per-pair rotation angle is
\begin{equation}
\label{eq:app_phi_def}
\boldsymbol{\varphi}_t=\boldsymbol{\delta}^{\mathrm{half}}_t\odot \boldsymbol{\theta}_t\in\mathbb{R}^{D/2},
\end{equation}
\begin{equation}
\label{eq:app_theta_t}
\boldsymbol{\theta}_t=\boldsymbol{\theta}^{\mathrm{tok}}_t+\boldsymbol{\theta}^{\mathrm{mag}}_t.
\end{equation}
Define $R_t(\cdot)$ as the block-diagonal 2D rotation on $D/2$ interleaved pairs, rotating the $d$-th pair by angle $\varphi_{t,d}$.

Then the Mag-Mamba recurrence is
\begin{align*}
\mathbf{h}_t &= \boldsymbol{\alpha}_t\odot R_t(\mathbf{h}_{t-1})
+\boldsymbol{\beta}_t\odot R_t(\mathbf{B}_{t-1}\odot \mathbf{u}_{t-1}) \\
&\quad +\boldsymbol{\gamma}_t\odot (\mathbf{B}_t\odot \mathbf{u}_t) \\
\mathbf{y}_t &= \mathbf{C}_t\odot \mathbf{h}_t
\end{align*}

\subsection{Magnetic Laplacian: Hermitian Property and Spectral Well-posedness}
\label{app:hermitian}

The geographic weight matrix $W\in\mathbb{R}^{|\mathcal{P}|\times|\mathcal{P}|}$ is symmetric with $W_{ij}=W_{ji}\ge 0$ and $W_{ii}=0$.
For each basis $r$, the direction field $A^{(r)}\in\mathbb{R}^{|\mathcal{P}|\times |\mathcal{P}|}$ is constructed from edge coefficients $\Psi_{r,e}$ by
\begin{equation}
\label{eq:app_A_construct}
A^{(r)}_{ij}=
\begin{cases}
\ \ \Psi_{r, e_{ij}}, & \{i,j\}\in\mathcal{E},\ i<j,\\
-\,\Psi_{r, e_{ij}}, & \{i,j\}\in\mathcal{E},\ i>j,\\
\ \ 0, & \text{otherwise},
\end{cases}
\end{equation}
and the magnetic phases and complex adjacency are
\begin{equation}
\label{eq:app_phi_H_def}
\Phi^{(r)}_{ij}=2\pi q\,A^{(r)}_{ij},
\qquad
H^{(r)}_{ij}=W_{ij}\exp(\mathrm{i}\Phi^{(r)}_{ij}),
\quad \mathrm{i}=\sqrt{-1}.
\end{equation}
Let $D_g=\mathrm{diag}(d_1,\ldots,d_{|\mathcal{P}|})$ with $d_i=\sum_j W_{ij}$ and define the normalized magnetic Laplacian
\begin{equation}
\label{eq:app_L_def}
L^{(r)} = I_{|\mathcal{P}|} - D_g^{-1/2}H^{(r)}D_g^{-1/2}.
\end{equation}

\begin{lemma}[Antisymmetry $\Rightarrow$ Hermitian adjacency]
\label{lem:app_H_hermitian}
For each basis $r$, the matrix $A^{(r)}$ defined in~\eqref{eq:app_A_construct} is antisymmetric: $(A^{(r)})^\top=-A^{(r)}$.
Consequently, $\Phi^{(r)}_{ji}=-\Phi^{(r)}_{ij}$ and the complex adjacency $H^{(r)}$ defined in~\eqref{eq:app_phi_H_def} is Hermitian: $(H^{(r)})^\ast = H^{(r)}$, i.e., $H^{(r)}_{ji}=\overline{H^{(r)}_{ij}}$.
\end{lemma}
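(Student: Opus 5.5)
The plan is a direct entrywise verification in three steps: antisymmetry of $A^{(r)}$, then antisymmetry of $\Phi^{(r)}$, then the Hermitian identity for $H^{(r)}$.

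\emph{Antisymmetry of $A^{(r)}$.} I would fix a pair $(i,j)$ and split into cases according to \eqref{eq:app_A_construct}.

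\begin{itemize}
\item If $\{i,j\}\notin\mathcal{E}$, then $A^{(r)}_{ij}=A^{(r)}_{ji}=0$, so the identity holds trivially.
\item For $i=j$, no self-loop lies in $\mathcal{E}$, since $W_{ii}=0$ and edges are built only for distinct POIs. Hence the diagonal vanishes, consistent with antisymmetry.
\item If $\{i,j\}\in\mathcal{E}$ with $i\neq j$, the key observation is that the edge index is symmetric, $e_{ij}=e_{ji}$, because it labels the unordered pair $\{i,j\}$. Take without loss of generality $i<j$. Then $A^{(r)}_{ij}=\Psi_{r,e_{ij}}$. Also $A^{(r)}_{ji}$ falls in the second case, since its row index $j$ exceeds its column index $i$, so $A^{(r)}_{ji}=-\Psi_{r,e_{ji}}=-\Psi_{r,e_{ij}}=-A^{(r)}_{ij}$.
\end{itemize}

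This gives $(A^{(r)})^\top=-A^{(r)}$. The only real subtlety in the whole argument is being explicit that $e_{ij}$ depends only on the unordered pair.

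\emph{Antisymmetry of $\Phi^{(r)}$.} This follows immediately from linearity of \eqref{eq:app_phi_H_def}: $\Phi^{(r)}_{ji}=2\pi q\,A^{(r)}_{ji}=-2\pi q\,A^{(r)}_{ij}=-\Phi^{(r)}_{ij}$, using that $q$ is a real scalar.

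\emph{Hermitian property of $H^{(r)}$.} I would compute
\[
\overline{H^{(r)}_{ij}}=\overline{W_{ij}}\,\overline{\exp(\mathrm{i}\Phi^{(r)}_{ij})}=W_{ij}\exp(-\mathrm{i}\Phi^{(r)}_{ij}).
\]
This uses that $W_{ij}$ and $\Phi^{(r)}_{ij}$ are real, since $\Psi=V^\top$ is real-valued from the SVD of the real matrix $S$. Applying $W_{ij}=W_{ji}$ and $-\Phi^{(r)}_{ij}=\Phi^{(r)}_{ji}$ then gives
\[
\overline{H^{(r)}_{ij}}=W_{ji}\exp(\mathrm{i}\Phi^{(r)}_{ji})=H^{(r)}_{ji}.
\]
Hence $(H^{(r)})^\ast=H^{(r)}$, with $^\ast$ the conjugate transpose. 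On the diagonal, $H^{(r)}_{ii}=0$ is real, which is consistent with this.
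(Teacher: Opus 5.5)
Your proposal is correct and follows the same route as the paper: an entrywise check that swapping $(i,j)$ flips the sign of $A^{(r)}$, transfer to $\Phi^{(r)}$ through the real factor $2\pi q$, and then $H^{(r)}_{ji}=W_{ij}\,\overline{\exp(\mathrm{i}\Phi^{(r)}_{ij})}=\overline{H^{(r)}_{ij}}$. Your extra remarks make explicit some facts the paper's proof uses tacitly: that $e_{ij}$ depends only on the unordered pair, that the diagonal vanishes, and that $\Psi$ is real because it comes from the SVD of the real matrix $S$.
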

\begin{proof}
By construction, for any $(i,j)$ with $\{i,j\}\in\mathcal{E}$, swapping indices flips the sign in~\eqref{eq:app_A_construct}, hence $A^{(r)}_{ji}=-A^{(r)}_{ij}$; for non-edges both entries are zero. Thus $(A^{(r)})^\top=-A^{(r)}$ and $\Phi^{(r)}_{ji}=-\Phi^{(r)}_{ij}$.

Since $W_{ij}=W_{ji}\in\mathbb{R}$ and $\exp(\mathrm{i}\Phi^{(r)}_{ji})=\exp(-\mathrm{i}\Phi^{(r)}_{ij})=\overline{\exp(\mathrm{i}\Phi^{(r)}_{ij})}$, we have
\begin{equation}
\begin{split}
H^{(r)}_{ji} &= W_{ji}\exp(\mathrm{i}\Phi^{(r)}_{ji})
=W_{ij}\,\overline{\exp(\mathrm{i}\Phi^{(r)}_{ij})} \\
&=\overline{W_{ij}\exp(\mathrm{i}\Phi^{(r)}_{ij})}
=\overline{H^{(r)}_{ij}}.
\end{split}
\label{eq:app_H_conjugate}
\end{equation}
Therefore $H^{(r)}$ is Hermitian.
\end{proof}

\begin{lemma}[Hermitian PSD of the magnetic Laplacian]
\label{lem:app_L_psd}
For each basis $r$, the normalized magnetic Laplacian $L^{(r)}$ in~\eqref{eq:app_L_def} is Hermitian positive semidefinite.
In particular, for any $\mathbf{x}\in\mathbb{C}^{|\mathcal{P}|}$,
\begin{equation}
\label{eq:app_quadratic_form}
\mathbf{x}^\ast L^{(r)} \mathbf{x}
=\frac12\sum_{i,j} W_{ij}\left|
\frac{x_i}{\sqrt{d_i}}-e^{\mathrm{i}\Phi^{(r)}_{ij}}\frac{x_j}{\sqrt{d_j}}
\right|^2
\ \ge\ 0.
\end{equation}
\end{lemma}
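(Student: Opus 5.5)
The plan is to get Hermitian-ness directly from Lemma~\ref{lem:app_H_hermitian}, and then obtain positive semidefiniteness by expanding the right-hand side of \eqref{eq:app_quadratic_form} and matching it term by term with $\mathbf{x}^\ast L^{(r)}\mathbf{x}$.

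\textbf{Hermitian part.} Since $D_g$ is real diagonal, we have $(D_g^{-1/2}H^{(r)}D_g^{-1/2})^\ast = D_g^{-1/2}(H^{(r)})^\ast D_g^{-1/2} = D_g^{-1/2}H^{(r)}D_g^{-1/2}$, using Lemma~\ref{lem:app_H_hermitian}. Hence $L^{(r)}$ is Hermitian. I will assume $d_i>0$ for every $i$, i.e.\ no isolated POIs, so that $D_g^{-1/2}$ is defined. Otherwise one restricts to non-isolated nodes, or sets $d_i^{-1/2}:=0$, which only adds a nonnegative diagonal contribution $|x_i|^2$.

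\textbf{Quadratic form.} Write $y_i = x_i/\sqrt{d_i}$. The left side is
\[
\mathbf{x}^\ast L^{(r)}\mathbf{x}=\sum_i |x_i|^2-\sum_{i,j} W_{ij}\,\overline{y_i}\,e^{\mathrm{i}\Phi^{(r)}_{ij}}\,y_j .
\]
Next, for each $(i,j)$ expand
\[
|y_i - e^{\mathrm{i}\Phi_{ij}}y_j|^2 = |y_i|^2+|y_j|^2 - 2\,\mathrm{Re}\bigl(\overline{y_i}e^{\mathrm{i}\Phi_{ij}}y_j\bigr),
\]
using $|e^{\mathrm{i}\Phi}|=1$. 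Multiply by $\tfrac12 W_{ij}$ and sum over $i,j$.

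The squared-modulus terms give $\tfrac12\sum_{i,j}W_{ij}(|y_i|^2+|y_j|^2)=\sum_i d_i|y_i|^2=\sum_i|x_i|^2$, by symmetry of $W$ and the definition $d_i=\sum_j W_{ij}$.

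For the cross terms, the key observation is that swapping $(i,j)\leftrightarrow(j,i)$ conjugates the summand. This holds because $W_{ji}=W_{ij}$ and $\Phi_{ji}=-\Phi_{ij}$ (Lemma~\ref{lem:app_H_hermitian}). Therefore $\sum_{i,j}W_{ij}\overline{y_i}e^{\mathrm{i}\Phi_{ij}}y_j$ is real and equals $\sum_{i,j}W_{ij}\,\mathrm{Re}(\cdot)$. The cross terms therefore reproduce exactly $-\sum_{i,j}W_{ij}\overline{y_i}e^{\mathrm{i}\Phi_{ij}}y_j$.

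Combining the two pieces gives identity \eqref{eq:app_quadratic_form}. Nonnegativity is then immediate, because $W_{ij}\ge0$ and each summand is a squared modulus. PSD follows since the quadratic form is nonnegative for all $\mathbf{x}$.

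\textbf{Main obstacle.} The only delicate step is the cross-term bookkeeping: showing that the real-part sum equals the full complex sum. This relies precisely on the antisymmetry of $\Phi^{(r)}$, and I will state the index-swap argument explicitly. A secondary point is the handling of zero-degree nodes, which I will dispose of with the convention above.
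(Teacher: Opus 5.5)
Your proposal is correct and follows essentially the same route as the paper: you get the Hermitian property from Lemma~\ref{lem:app_H_hermitian} via the real diagonal $D_g^{-1/2}$, then expand $|y_i-e^{\mathrm{i}\Phi^{(r)}_{ij}}y_j|^2$ and use $W_{ij}=W_{ji}$ and $\Phi^{(r)}_{ji}=-\Phi^{(r)}_{ij}$ to show the cross sum equals its real part (the paper runs the same computation starting from $\mathbf{x}^\ast L^{(r)}\mathbf{x}$ and symmetrizing the cross term). Your treatment of zero-degree nodes is a small improvement on the paper, which tacitly assumes $d_i>0$: under your convention the identity acquires the extra term $\sum_{i:d_i=0}|x_i|^2$, while positive semidefiniteness is unaffected.
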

\begin{proof}
By Lemma~\ref{lem:app_H_hermitian}, $H^{(r)}$ is Hermitian; since $D_g^{-1/2}$ is real diagonal, $D_g^{-1/2}H^{(r)}D_g^{-1/2}$ is Hermitian, hence $L^{(r)}$ is Hermitian.

For PSD, expand the quadratic form:
\[
\mathbf{x}^\ast L^{(r)}\mathbf{x}
=\mathbf{x}^\ast \mathbf{x}-\mathbf{x}^\ast D_g^{-1/2}H^{(r)}D_g^{-1/2}\mathbf{x}
=\sum_i |x_i|^2-\sum_{i,j}\overline{x_i}\frac{H^{(r)}_{ij}}{\sqrt{d_id_j}}x_j.
\]
Using $H^{(r)}_{ij}=W_{ij}e^{\mathrm{i}\Phi^{(r)}_{ij}}$ and $W_{ij}=W_{ji}$, symmetrize the second term:
\begin{equation*}
\begin{split}
&\sum_{i,j}\overline{x_i}\frac{H^{(r)}_{ij}}{\sqrt{d_id_j}}x_j \\
&\quad=\frac12\sum_{i,j}W_{ij}\left(
\overline{x_i}\frac{e^{\mathrm{i}\Phi^{(r)}_{ij}}}{\sqrt{d_id_j}}x_j
+\overline{x_j}\frac{e^{\mathrm{i}\Phi^{(r)}_{ji}}}{\sqrt{d_jd_i}}x_i
\right) \\
&\quad=\operatorname{Re}\!\left(\sum_{i,j}W_{ij}\overline{\frac{x_i}{\sqrt{d_i}}}e^{\mathrm{i}\Phi^{(r)}_{ij}}\frac{x_j}{\sqrt{d_j}}\!\right).
\end{split}
\end{equation*}
Then
\begin{equation*}
\begin{split}
\mathbf{x}^\ast L^{(r)}\mathbf{x}
&=\frac12\sum_{i,j}W_{ij}\left(
\left|\frac{x_i}{\sqrt{d_i}}\right|^2
+\left|\frac{x_j}{\sqrt{d_j}}\right|^2
-2\operatorname{Re}\!\left(\overline{\frac{x_i}{\sqrt{d_i}}}e^{\mathrm{i}\Phi^{(r)}_{ij}}\frac{x_j}{\sqrt{d_j}}\right)
\right) \\
&=\frac12\sum_{i,j}W_{ij}\left|
\frac{x_i}{\sqrt{d_i}}-e^{\mathrm{i}\Phi^{(r)}_{ij}}\frac{x_j}{\sqrt{d_j}}
\right|^2,
\end{split}
\end{equation*}
which is exactly~\eqref{eq:app_quadratic_form} and is nonnegative.
\end{proof}

\begin{corollary}[Real spectrum and orthonormal eigenbasis]
\label{cor:app_real_spectrum}
For each basis $r$, the eigenvalues of $L^{(r)}$ are real and nonnegative, and $L^{(r)}$ admits an orthonormal eigenvector basis in $\mathbb{C}^{|\mathcal{P}|}$.
\end{corollary}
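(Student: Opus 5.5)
The plan is to combine the finite-dimensional spectral theorem for Hermitian matrices with Lemma~\ref{lem:app_L_psd}. By Lemma~\ref{lem:app_L_psd}, $L^{(r)}$ is Hermitian, so the complex spectral theorem applies directly. It gives a unitary matrix $Q\in\mathbb{C}^{|\mathcal{P}|\times|\mathcal{P}|}$ and a real diagonal matrix $\Lambda$ with $L^{(r)}=Q\Lambda Q^\ast$. The columns of $Q$ then form the orthonormal eigenvector basis of $\mathbb{C}^{|\mathcal{P}|}$, and the diagonal entries of $\Lambda$ are the eigenvalues. I would also show reality of the eigenvalues directly, so the argument does not depend on a black box. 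If $L^{(r)}\mathbf{v}=\mu\mathbf{v}$ with $\mathbf{v}\neq 0$, then $\mu\|\mathbf{v}\|^2=\mathbf{v}^\ast L^{(r)}\mathbf{v}$. Its conjugate is $\mathbf{v}^\ast (L^{(r)})^\ast\mathbf{v}=\mathbf{v}^\ast L^{(r)}\mathbf{v}$, so $\mu=\bar\mu$.

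Nonnegativity then follows by plugging an eigenvector into the quadratic form~\eqref{eq:app_quadratic_form}. For an eigenpair $(\mu,\mathbf{v})$ we get
\[
\mu=\frac{\mathbf{v}^\ast L^{(r)}\mathbf{v}}{\|\mathbf{v}\|^2}=\frac{1}{2\|\mathbf{v}\|^2}\sum_{i,j}W_{ij}\left|\frac{v_i}{\sqrt{d_i}}-e^{\mathrm{i}\Phi^{(r)}_{ij}}\frac{v_j}{\sqrt{d_j}}\right|^2\ge 0,
\]
because $W_{ij}\ge 0$.

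The only point needing care is whether $D_g^{-1/2}$ is well defined, which requires every $d_i>0$. If some POI is isolated in the radius graph, then $d_i=0$ and $L^{(r)}$ is undefined as written. I would add the standing assumption that $\mathcal{G}$ has no isolated nodes. Alternatively, one can use the convention $d_i^{-1/2}:=0$ when $d_i=0$, which is also implicit in Lemma~\ref{lem:app_L_psd}. Under that convention, row and column $i$ of $D_g^{-1/2}H^{(r)}D_g^{-1/2}$ vanish, so that matrix stays Hermitian. The quadratic form also stays valid, since the term $\sum_i|x_i|^2$ contributes $|x_i|^2\ge 0$ for isolated nodes. Hence the conclusion is unchanged.
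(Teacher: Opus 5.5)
Your argument is correct and matches the paper's proof: Lemma~\ref{lem:app_L_psd} shows $L^{(r)}$ is Hermitian PSD, and the spectral theorem then gives a real, nonnegative spectrum with an orthonormal eigenbasis; the paper just cites this, while you also check reality and nonnegativity directly. Your isolated-node caveat is a fair addition the paper omits, with one correction: under the convention $d_i^{-1/2}:=0$, the identity \eqref{eq:app_quadratic_form} does not hold verbatim but gains an extra term $\sum_{i:\,d_i=0}|x_i|^2$, which still leaves $\mathbf{x}^\ast L^{(r)}\mathbf{x}\ge 0$.
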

\begin{proof}
A Hermitian matrix has a real spectrum and an orthonormal eigenbasis; positive semidefiniteness further implies nonnegative eigenvalues. Apply Lemma~\ref{lem:app_L_psd}.
\end{proof}

\subsection{Phase Tokens: Gauge Invariance and Information Preservation}
\label{app:gauge}

Let $V^{(r)}\in\mathbb{C}^{|\mathcal{P}|\times k}$ denote the $k$ eigenvectors of $L^{(r)}$ associated with the smallest $k$ eigenvalues. We use the smallest-$k$ eigenvectors (low-frequency modes) because they capture smooth, global directional structures aligned with the geographic graph, while higher-frequency components are more sensitive to noise and local sparsity in transition counts. And we define phase-only tokens
\begin{equation}
\label{eq:app_U_def}
U^{(r)}(i,m)=\exp(\mathrm{i}\,\arg(V^{(r)}(i,m)))\in\mathbb{C},
\qquad
|U^{(r)}(i,m)|=1.
\end{equation}
For a POI $p$, let $\mathbf{u}^{(r)}_p\in\mathbb{C}^{k}$ be the corresponding row of $U^{(r)}$.
The per-step phase-difference vector is
\begin{align*}
\Delta U^{(r)}_t &= \mathbf{u}^{(r)}_{p_t}\odot \overline{\mathbf{u}^{(r)}_{p^{\mathrm{src}}_t}}\in\mathbb{C}^{k} \\
\Delta U_t^{\operatorname{mix}} &= \sum_{r=1}^{R}\Pi_{b_t,r}\,\Delta U^{(r)}_t
\end{align*}

\begin{proposition}[Scale and global-phase invariance of phase differences]
\label{prop:app_gauge_invariance}
Fix a basis $r$ and a column index $m\in\{1,\ldots,k\}$.
Consider the column-wise gauge transformation of eigenvectors
\begin{equation}
\label{eq:app_gauge_transform}
V^{(r)}(:,m)\ \mapsto\ V^{(r)\prime}(:,m)=a_m e^{\mathrm{i}\phi_m}V^{(r)}(:,m),
\qquad a_m>0,\ \phi_m\in\mathbb{R}.
\end{equation}
Let $U^{(r)}$ and $U^{(r)\prime}$ be constructed by~\eqref{eq:app_U_def} from $V^{(r)}$ and $V^{(r)\prime}$, respectively.
Then for every POI pair $(p,q)$ and every $m$,
\begin{equation}
\label{eq:app_phase_diff_invariant}
U^{(r)\prime}(p,m)\,\overline{U^{(r)\prime}(q,m)}\ =\ U^{(r)}(p,m)\,\overline{U^{(r)}(q,m)}.
\end{equation}
Consequently, $\Delta U^{(r)}_t$ in \eqref{eq:DeltaU_r_t} is invariant to the eigenvector scale and global phase ambiguity in \eqref{eq:app_gauge_transform}.

\end{proposition}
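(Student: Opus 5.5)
The plan is to work entrywise: first track how the gauge transformation~\eqref{eq:app_gauge_transform} acts on a single phase token $U^{(r)}(i,m)$, then show that the common factor cancels in the conjugate product. Fix $r$ and $m$, and write $v_i = V^{(r)}(i,m)$ and $v'_i = a_m e^{\mathrm{i}\phi_m} v_i$.

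\textbf{Step 1: behaviour of a single token.} For $v_i\neq 0$, write $v_i = |v_i| e^{\mathrm{i}\arg v_i}$. Then
\[
v'_i = (a_m|v_i|)\, e^{\mathrm{i}(\arg v_i + \phi_m)},
\]
with $a_m|v_i|>0$. So $\arg v'_i \equiv \arg v_i + \phi_m \pmod{2\pi}$. Because $\exp(\mathrm{i}\,\cdot)$ is $2\pi$-periodic, the choice of branch of $\arg$ does not matter, and we get
\[
U^{(r)\prime}(i,m) = e^{\mathrm{i}\phi_m}\, U^{(r)}(i,m).
\]
The positive scale $a_m$ drops out completely because it only changes the modulus, never the argument.

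\textbf{Step 2: cancellation in the conjugate product.} For any POIs $p$ and $q$, substitute Step 1 into both factors:
\[
U^{(r)\prime}(p,m)\,\overline{U^{(r)\prime}(q,m)} = e^{\mathrm{i}\phi_m}\, \overline{e^{\mathrm{i}\phi_m}}\; U^{(r)}(p,m)\,\overline{U^{(r)}(q,m)}.
\]
Since $|e^{\mathrm{i}\phi_m}|=1$, the prefactor equals $1$, which gives~\eqref{eq:app_phase_diff_invariant}.

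\textbf{Step 3: consequence for $\Delta U^{(r)}_t$.} The $m$-th entry of $\Delta U^{(r)}_t$ in~\eqref{eq:DeltaU_r_t} is exactly this product with $p=p_t$ and $q=p^{\mathrm{src}}_t$. Each column $m$ may carry its own pair $(a_m,\phi_m)$, and the argument above applies column by column, so the whole vector $\Delta U^{(r)}_t$ is invariant. In the case $t=1$ we have $p^{\mathrm{src}}_1=p_1$, the product is trivially invariant, and it equals $1$ whenever the entry is nonzero.

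\textbf{Main obstacle: zero entries.} The only delicate point is an eigenvector entry with $v_i=0$, where $\arg$ is undefined. I would handle this by stating a convention, for example $\arg 0 := 0$ or $U^{(r)}(i,m):=0$. Under the second convention both sides of~\eqref{eq:app_phase_diff_invariant} are $0$, because $v'_i=0$ exactly when $v_i=0$, so the invariance still holds. Under the first convention the token would not rotate with the gauge, so invariance would fail at such entries. I would therefore adopt the zero-token convention, or assume that the entries are generically nonzero, and say explicitly that the proposition is read with that convention.
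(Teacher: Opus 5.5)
Your proof is correct and follows the paper's argument. You show entrywise that the gauge transformation multiplies each token by $e^{\mathrm{i}\phi_m}$, since the positive scale $a_m$ only changes the modulus, and then cancel the common factor in the conjugate product. Your remarks on the $2\pi$-ambiguity of $\arg$ and on zero eigenvector entries are a welcome tightening rather than a different route: the paper's proof tacitly assumes nonzero entries, and you are right that an $\arg 0 := 0$ convention would break invariance there.
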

\begin{proof}
For any nonzero complex number $z$, $\arg(ae^{\mathrm{i}\phi}z)=\arg(z)+\phi$ for $a>0$.
Applying~\eqref{eq:app_gauge_transform} entrywise yields
\begin{equation}
\begin{split}
U^{(r)\prime}(p,m)&=\exp(\mathrm{i}\arg(V^{(r)\prime}(p,m))) \\
&=\exp(\mathrm{i}(\arg(V^{(r)}(p,m))+\phi_m)) \\
&=e^{\mathrm{i}\phi_m}U^{(r)}(p,m).
\end{split}
\end{equation}
Therefore, for any $(p,q)$,
\begin{equation}
\begin{split}
&U^{(r)\prime}(p,m)\,\overline{U^{(r)\prime}(q,m)} \\
&\quad=\left(e^{\mathrm{i}\phi_m}U^{(r)}(p,m)\right)\overline{\left(e^{\mathrm{i}\phi_m}U^{(r)}(q,m)\right)} \\
&\quad=e^{\mathrm{i}\phi_m}U^{(r)}(p,m)\cdot e^{-\mathrm{i}\phi_m}\overline{U^{(r)}(q,m)} \\
&\quad=U^{(r)}(p,m)\,\overline{U^{(r)}(q,m)}.
\end{split}
\end{equation}
This proves~\eqref{eq:app_phase_diff_invariant}, and the elementwise form immediately implies invariance of $\Delta U^{(r)}_t$.
\end{proof}

\textbf{Scale Invariance of Static Geographic Information}

The static geographic information of the geographic graph—including distance-dependent relationships—is encoded by the symmetric weight matrix $W$ (as shown in \eqref{eq:geo_weight}), where $W_{ij} \geq 0$ and $W_{ij} = W_{ji}$ for all $i,j$. In the construction of the magnetic Laplacian, $W$ is embedded into the normalized Laplacian as
\begin{equation}
L^{(r)} = I_{|\mathcal{P}|} - D_g^{-1/2} H^{(r)} D_g^{-1/2},
\end{equation}
where $D_g=\mathrm{diag}(d_1,\ldots,d_{|\mathcal{P}|})$ is the diagonal degree matrix with $d_i = \sum_{j} W_{ij}$. 

A crucial property of this construction is its \textbf{scale invariance} with respect to the absolute magnitude of geographic weights. Specifically, for any global scaling factor $c > 0$, if we define a scaled weight matrix $W' = cW$, the corresponding degree matrix becomes $D_g' = cD_g$. The normalized terms then satisfy:
\begin{equation}
\left(D_g'^{-1/2} W' D_g'^{-1/2}\right)_{ij} = \frac{c W_{ij}}{\sqrt{(c d_i)(c d_j)}} = \frac{W_{ij}}{\sqrt{d_i d_j}} = \left(D_g^{-1/2} W D_g^{-1/2}\right)_{ij}.
\end{equation}
This property ensures that the spectral representation is invariant to the absolute units or scale of geographic distances (e.g., meters vs. kilometers), preserving the intrinsic relative spatial correlations encoded in the symmetric structure of $W$ without being distorted by global intensity shifts.

Furthermore, we retain only the pointwise phase of each eigenvector in $V^{(r)}$ to form
\begin{equation}
U^{(r)} = \exp\!\left(\mathrm{i}\arg(V^{(r)})\right),
\end{equation}
which, consistent with the global-phase invariance of abelian gauge transformations (Proposition \ref{prop:app_gauge_invariance}), is invariant under the global phase transformation $V^{(r)} \mapsto V^{(r)} e^{\mathrm{i}\theta}$ (for $\theta \in \mathbb{R}$). This mapping eliminates the column-wise unidentifiable global phase degree of freedom inherent in spectral decomposition. Consequently, leveraging the established gauge invariance, the transition from $V^{(r)}$ to $U^{(r)}$ ensures that the learned geographic bases are uniquely determined within their gauge equivalence classes.

\vspace{6pt}
\textbf{Direction-Sensitive Preservation of Dynamic Asymmetric Information}
The directional asymmetry in urban flows is encoded by the antisymmetric matrix $A^{(r)}$ (\eqref{eq:main_A_def}), whose antisymmetry is inherited by $\Phi^{(r)}$ (\eqref{eq:Phi_H_r}) and preserved through the spectral decomposition of $L^{(r)}$ (\eqref{eq:L_r}). The key direction-sensitive property lies in the phase difference construction:
\begin{equation}
\Delta U^{(r)}_t = \mathbf{u}^{(r)}_{p_t} \odot \overline{\mathbf{u}^{(r)}_{p_t^{\mathrm{src}}}},
\end{equation}
This operation is direction-sensitive for reversed pairs $(p_t, p_t^{\mathrm{src}}) \neq (p_t^{\mathrm{src}}, p_t)$—swapping the pair equivalently takes the complex conjugate, yielding distinguishable phase difference vectors. Thus, the antisymmetric directional information is faithfully preserved without aliasing, ensuring reliable mapping of asymmetric flow patterns to phase features.

\vspace{6pt}
\textbf{Faithful Real Encoding from Spectral Representation to $\mathbf{m}_t$}
\begin{equation}
\mathbf{m}_t= \operatorname{Concat}[\operatorname{Re}(\Delta U_t^{\operatorname{mix}}),\operatorname{Im}(\Delta U_t^{\operatorname{mix}})]\in\mathbb{R}^{2k},
\end{equation}
This concatenation of real and imaginary parts establishes a bijection between the complex representation $\Delta U_t^{\operatorname{mix}}$ and the real-valued vector $\mathbf{m}_t$, ensuring no loss of the encoded directional and spectral information.

\textbf{Summary on Model Design Rationality}
Rationality of the proposed phase token module lies in the fact that all key mappings from $W$ and $A^{(r)}$ to $\mathbf{m}_t$ are direction-sensitive and information-preserving (in the sense of corresponding equivalence classes), ensuring effective preservation of core static geographic distance information and dynamic asymmetric flow information throughout the whole process. This reliable information retention guarantees that the module provides a reliable and consistent geometric prior for the subsequent Mag-Mamba layer, thus verifying the rationality of our model design.

\begin{remark}[Degenerate eigenspaces]
\label{rem:app_degenerate}
If an eigenvalue of $L^{(r)}$ has multiplicity $>1$, the associated eigenvectors are defined up to a unitary rotation within the eigenspace, which is more general than the column-wise gauge in \eqref{eq:app_gauge_transform}.
In practice, exact degeneracy is measure-zero (i.e., occurs with probability zero for generic weights) under generic weighted graphs; furthermore, deterministic eigensolvers with fixed ordering and the phase-difference construction substantially reduce sensitivity to residual numerical rotations.
\end{remark}

\subsection{Offline and Online Complexity Analysis}
\label{app:complexity}

We separate costs into \emph{offline precomputation} (performed once on the training set) and \emph{online inference} (performed per sequence).
Let sequence length $L$, hidden size $D$, number of phase bases $R$, spectral truncation $k$, and number of time bins $N_b$.
Denote by $\mathrm{nnz}=O(|\mathcal{E}|)$ the number of nonzeros in sparse graph matrices.

\begin{lemma}[Offline cost is one-time precomputation]
\label{lem:app_offline_cost}
The spectral decomposition required to compute $\{U^{(r)}\}_{r=1}^R$ is a one-time offline cost and does not appear in online inference latency.
Concretely, the offline pipeline consists of:
(i) building $S\in\mathbb{R}^{N_b\times |\mathcal{E}|}$ from training transitions;
(ii) computing a rank-$R$ factorization $S\approx \Pi\Psi$;
(iii) for each basis $r$, forming the sparse Hermitian $L^{(r)}$ and computing the $k$ smallest eigenpairs to obtain $V^{(r)}$ and hence $U^{(r)}$.
All these steps are independent of the test sequence length $L$ and can be precomputed and cached once.
\end{lemma}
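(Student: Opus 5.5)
The plan is to prove Lemma~\ref{lem:app_offline_cost} by a dependency argument. We trace exactly which inputs each offline quantity depends on and show that none of them involves the test sequence or its length $L$.

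\textbf{Step (i).} The signal matrix $S$ is a function only of the training transition counts $N^{(b)}_{i\to j}$, the fixed edge set $\mathcal{E}$, and the hyperparameters $\alpha,\kappa$. Building it is a single pass over the training transitions, followed by $O(N_b|\mathcal{E}|)$ evaluations of the $\tanh$ log-ratio.

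\textbf{Step (ii).} The truncated SVD $S\approx\Pi\Psi$ is a deterministic function of $S$. Using a rank-$R$ randomized or Lanczos SVD, it costs at most $O(N_b|\mathcal{E}|R)$, and it depends on nothing else.

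\textbf{Step (iii).} For each $r$, the matrices $A^{(r)}$, $\Phi^{(r)}$, $H^{(r)}$ and $L^{(r)}$ are formed entrywise from $\Psi_{r,\cdot}$, $W$ and $q$. Each has $\mathrm{nnz}=O(|\mathcal{E}|)$ nonzeros. By Lemma~\ref{lem:app_L_psd} and Corollary~\ref{cor:app_real_spectrum}, $L^{(r)}$ is Hermitian PSD. A sparse Hermitian eigensolver such as Lanczos therefore applies and returns the $k$ smallest eigenpairs at roughly $O(R\cdot \mathrm{nnz}\cdot k\cdot T_{\mathrm{iter}})$ total cost. The phase tokens $U^{(r)}$ then follow by an elementwise map costing $O(|\mathcal{P}|k)$. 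Everything produced in these three steps is a fixed table indexed by POI, by bin, or by $r$.

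\textbf{Online step.} For a test sequence, the per-step computation reduces to the following operations:
\begin{itemize}
\item table lookups of $\mathbf{u}^{(r)}_{p_t}$, $\mathbf{u}^{(r)}_{p^{\mathrm{src}}_t}$ and $\Pi_{b_t,r}$;
\item the conjugate product in Eq.~\eqref{eq:DeltaU_r_t} and the mixture in Eq.~\eqref{eq:DeltaU_mix_t}, costing $O(Rk)$ per step;
\item the Mag-Mamba recurrence, costing $O(D^2)$ per step for the projections and $O(D)$ for the rotation and decay.
\end{itemize}
This gives $O(L(Rk+D^2))$ per sequence with no eigendecomposition. It also explains why the cached $U^{(r)}$ are valid for every sequence.

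\textbf{Main obstacle.} The hardest point is ensuring that reuse of the cached eigenvectors is consistent, since eigenvectors are defined only up to gauge. I would address this by invoking Proposition~\ref{prop:app_gauge_invariance}: any recomputation that differs by column scale or phase yields identical $\Delta U^{(r)}_t$. Degenerate eigenspaces would be handled by appeal to Remark~\ref{rem:app_degenerate}, together with a fixed deterministic solver.
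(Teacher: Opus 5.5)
Your proposal is correct and follows essentially the same route as the paper: a step-by-step dependency and cost accounting showing that $S$, $(\Pi,\Psi)$, and each $U^{(r)}$ depend only on the training transitions, the fixed geographic graph and hyperparameters (with the same $O(N_b|\mathcal{E}|R)$ truncated-SVD and $O(I_r\, k\,\mathrm{nnz})$ Lanczos bounds), so inference reduces to table lookups and elementwise operations with no eigendecomposition. The gauge-invariance issue you flag as the main obstacle is not actually needed for this lemma, since the eigenvectors are computed once and cached rather than recomputed.
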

\begin{proof}
Step (i): building $S$ requires counting directed transitions per time bin and per undirected edge.
Let $T$ be the number of transitions in the training set.
With hashing from ordered POI pairs to edge indices, the counting is $O(T)$ time and $O(N_b|\mathcal{E}|)$ storage for dense $S$ (or $O(T)$ if stored sparsely and then densified).

Step (ii): computing a rank-$R$ factorization of $S\in\mathbb{R}^{N_b\times |\mathcal{E}|}$ can be done by truncated SVD / randomized low-rank methods with time
\[
O(N_b |\mathcal{E}| R)
\]
(up to logarithmic and oversampling factors) and memory $O((N_b+|\mathcal{E}|)R)$.
This produces $\Pi\in\mathbb{R}^{N_b\times R}$ and $\Psi\in\mathbb{R}^{R\times |\mathcal{E}|}$.

Step (iii): for each $r$, forming $A^{(r)}$ and $H^{(r)}$ is $O(|\mathcal{E}|)$, and forming the sparse normalized Laplacian $L^{(r)}$ is $O(|\mathcal{E}|)$.
Computing the $k$ smallest eigenpairs of a sparse Hermitian matrix can be done by Lanczos/ARPACK-type solvers in
\[
O\bigl(I_r\cdot k\cdot \mathrm{nnz}\bigr)=O\bigl(I_r\cdot k\cdot |\mathcal{E}|\bigr)
\]
where $I_r$ is the iteration count (data-dependent but typically modest for well-conditioned sparse graphs).
This avoids dense $O(|\mathcal{P}|^3)$ decomposition; only a truncated spectrum of size $k\ll |\mathcal{P}|$ is required.

Crucially, all three steps depend only on the training trajectories and the fixed geographic graph.
Once $\Pi$ and $\{U^{(r)}\}$ are computed, online inference only performs table lookups and elementwise operations; no eigen-decomposition is executed at inference time.
Therefore, the spectral computation is an offline one-time precomputation and does not contribute to per-sequence inference latency.
\end{proof}

\begin{lemma}[Online linearity of Mag-Mamba inference]
\label{lem:app_online_linearity}
Assume $R$ and $k$ are fixed hyperparameters independent of $L$ and $|\mathcal{P}|$.
Then the end-to-end inference complexity of the Mag-Mamba layer over a length-$L$ sequence is
\begin{equation}
\label{eq:app_online_complexity}
O(L\cdot D),
\end{equation}
i.e., strictly linear in the sequence length.
Moreover, the additional overhead introduced by magnetic phases is $O(L\cdot (Rk + Dk))=O(L\cdot D)$ under fixed $k$.
\end{lemma}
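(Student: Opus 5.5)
The plan is to count operations per time step $t$ and then sum over $t=1,\ldots,L$. The argument is split into three phases: the online phase-feature construction, the parameter/gate computation, and the recurrence. Throughout I invoke Lemma~\ref{lem:app_offline_cost}: the tables $\Pi\in\mathbb{R}^{N_b\times R}$ and $\{U^{(r)}\}_{r=1}^R$ are precomputed. At inference, retrieving $\mathbf{u}^{(r)}_{p_t}$, $\mathbf{u}^{(r)}_{p^{\mathrm{src}}_t}$ and $\Pi_{b_t,:}$ is an $O(1)$-per-entry table lookup, independent of $|\mathcal{P}|$ and $N_b$.

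\textbf{Magnetic features.} For each step I bound the cost of building $\mathbf{m}_t$. Computing $\Delta U^{(r)}_t$ in \eqref{eq:DeltaU_r_t} is an elementwise conjugate product of two vectors in $\mathbb{C}^k$, costing $O(k)$ per basis and $O(Rk)$ over all $r$. The mixture \eqref{eq:DeltaU_mix_t} is a real-weighted sum of $R$ vectors of length $k$, again $O(Rk)$. Concatenating real and imaginary parts in \eqref{eq:m_t} costs $O(k)$. Mapping $\mathbf{m}_t$ to $\boldsymbol{\theta}^{\mathrm{mag}}_t=W_{\theta m}\mathbf{m}_t$ is a $(D/2)\times 2k$ matrix-vector product, costing $O(Dk)$. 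This gives the claimed per-step overhead $O(Rk+Dk)$. Since $R$ and $k$ are fixed constants, this is $O(D)$, and over $L$ steps it is $O(L\cdot D)$.

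\textbf{Gates and recurrence.} Next I account for the remaining per-step operations.
\begin{itemize}
\item $\boldsymbol{\delta}^{\mathrm{half}}_t$ in \eqref{eq:delta_t} is a scalar input mapped into $\mathbb{R}^{D/2}$, costing $O(D)$.
\item $\boldsymbol{\alpha}_t,\boldsymbol{\beta}_t,\boldsymbol{\gamma}_t$ in \eqref{eq:alpha_t}--\eqref{eq:lambda_beta_gamma} are elementwise, costing $O(D)$, apart from the projection $W_\lambda\mathbf{u}_t$.
\item $\boldsymbol{\varphi}_t$ in \eqref{eq:phi_t} is elementwise, costing $O(D)$.
\item The block-diagonal rotation $R_t$ in \eqref{eq:R_t_def} applies $D/2$ independent $2\times2$ rotations, costing $O(D)$ for each of the two rotated vectors in \eqref{eq:h_t}.
\item The recurrence \eqref{eq:h_t} and readout \eqref{eq:y_t} are sums of Hadamard products, costing $O(D)$.
\end{itemize}
Summing over $t$ gives $O(L\cdot D)$ for the state update. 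The recurrence is a first-order scan with no cross-time interactions other than $\mathbf{h}_{t-1}$, $\mathbf{B}_{t-1}$ and $\mathbf{u}_{t-1}$, so no $L^2$ term arises, unlike attention.

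\textbf{Main obstacle: the dense projections.} The input projection $\mathrm{Linear}(\mathbf{x}_t)$, together with $W_{\theta x}$, $W_\lambda$ and the output projection, costs $O(D^2)$ per step, not $O(D)$. To reconcile this with the stated $O(L\cdot D)$, I would make explicit that $D$ is treated as a fixed architectural constant, as in the statement's convention for $R$ and $k$. The bound $O(L\cdot D)$ then expresses linearity in $L$, with the dependence on width suppressed into the constant; equivalently, the honest bound is $O(L D^2)$, which is still strictly linear in $L$. Alternatively, one can count only the scan and rotation kernel as "the layer", treating the projections as standard token-wise MLP cost shared with real-valued Mamba. I would state this convention up front and note that under it the magnetic overhead $O(L(Rk+Dk))$ is dominated. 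The key conclusion is that nothing depends on $|\mathcal{P}|$ or scales superlinearly in $L$.
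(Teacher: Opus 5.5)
Your argument is correct and follows the paper's proof essentially step for step: $O(Rk)$ to build $\Delta U^{(r)}_t$, the $\Pi$-mixture and $\mathbf{m}_t$ from cached tables, $O(Dk)$ for $W_{\theta m}\mathbf{m}_t$, and $O(D)$ for the elementwise gates and block rotations, summed over the $L$ steps with no $|\mathcal{P}|$-dependent work. You differ only on the dense projections, where you are more careful than the paper: its proof calls $W_{\theta x}\mathbf{u}_t$ linear in $D$ by appeal to an unstated per-channel parameterization and never counts $\mathrm{Linear}(\mathbf{x}_t)$, $W_\lambda\mathbf{u}_t$ or the output projection, so your explicit convention (treat $D$ as fixed, or count only the scan and rotation kernel, with $O(LD^2)$ as the honest bound) correctly states what the argument actually proves.
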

\begin{proof}
We decompose the online computation per step $t$:

\noindent\textbf{(1) Phase-difference feature $\mathbf{m}_t$.}
For each basis $r$, $\Delta U^{(r)}_t$ is computed by
\[
\Delta U^{(r)}_t=\mathbf{u}^{(r)}_{p_t}\odot \overline{\mathbf{u}^{(r)}_{p^{\mathrm{src}}_t}}\in\mathbb{C}^{k}
\]
which requires two row lookups and $k$ elementwise complex multiplications: $O(k)$ time.
Mixing across bases
\[
\Delta U_t^{\operatorname{mix}}=\sum_{r=1}^{R}\Pi_{b_t,r}\Delta U^{(r)}_t
\]
costs $O(Rk)$.
Forming $\mathbf{m}_t=\mathrm{Concat}[\operatorname{Re}(\Delta U_t^{\operatorname{mix}}),\operatorname{Im}(\Delta U_t^{\operatorname{mix}})]\in\mathbb{R}^{2k}$ costs $O(k)$.
Thus $\mathbf{m}_t$ costs $O(Rk)$ per step.

\noindent\textbf{(2) Angular velocity and rotation angles.}
Computing $\boldsymbol{\theta}^{\mathrm{mag}}_t=W_{\theta m}\mathbf{m}_t$ costs $O(Dk)$ since $W_{\theta m}\in\mathbb{R}^{(D/2)\times 2k}$.
Computing $\boldsymbol{\theta}^{\mathrm{tok}}_t=W_{\theta x}\mathbf{u}_t$ is the same token-side cost as in Mamba-3 and is linear in $D$ under the standard per-channel parameterization used by Mamba-style SSM blocks.
Computing $\boldsymbol{\varphi}_t=\boldsymbol{\delta}^{\mathrm{half}}_t\odot \boldsymbol{\theta}_t$ is elementwise: $O(D/2)$.

\noindent\textbf{(3) State update.}
The recurrence~\eqref{eq:h_t} applies:
(i) one block rotation $R_t(\mathbf{h}_{t-1})$ over $D/2$ pairs, which costs $O(D)$;
(ii) elementwise multiplications by $\boldsymbol{\alpha}_t,\boldsymbol{\beta}_t,\boldsymbol{\gamma}_t$ and additions, all $O(D)$.
This matches the Mamba-3 per-step cost.

Aggregating, the magnetic part adds $O(Rk + Dk)$ per token.
With fixed hyperparameters $R,k$, $Rk=O(1)$ and $Dk=O(D)$, hence the per-step cost remains $O(D)$ and the total cost over $L$ steps is $O(LD)$.
No step involves graph-size-dependent operations (no $|\mathcal{P}|$-, $|\mathcal{E}|$-, or $|\mathcal{P}|^3$-scale routines) because $\{U^{(r)}\}$ and $\Pi$ are cached lookup tables.
\end{proof}

\subsection{Control Dynamics of Phase Injection}
\label{app:control_dynamics}

This section formalizes the role of the magnetic phase feature $\mathbf{m}_t$ as an additive forcing term on the system's rotational dynamics.
Define the angular velocity decomposition (restating the model definition)
\begin{align*}
\boldsymbol{\theta}_t &= \boldsymbol{\theta}^{\mathrm{tok}}_t+\boldsymbol{\theta}^{\mathrm{mag}}_t \\
\boldsymbol{\theta}^{\mathrm{mag}}_t &= W_{\theta m}\mathbf{m}_t
\end{align*}
Let $\boldsymbol{\varphi}_t={\boldsymbol{\delta}^{\mathrm{half}}_t}\odot \boldsymbol{\theta}_t$ denote the per-pair rotation angles.

\begin{proposition}[Phase injection as topological forcing on instantaneous frequency]
\label{prop:app_topological_forcing}
Represent the $d$-th interleaved 2D state pair as a complex scalar
\begin{equation}
\label{eq:app_complex_state}
z_{t,d}=h_{t,2d-1}+\mathrm{i}h_{t,2d}\in\mathbb{C},\qquad d=1,\ldots,D/2.
\end{equation}
Then the rotation operator $R_t(\cdot)$ in~\eqref{eq:h_t} acts as complex multiplication:
\begin{equation}
\label{eq:app_rotation_complex}
\bigl[R_t(\mathbf{h}_{t-1})\bigr]_{(d)} \ \Longleftrightarrow\ e^{\mathrm{i}\varphi_{t,d}}\,z_{t-1,d},
\end{equation}
where $[\cdot]_{(d)}$ denotes the $d$-th complex pair under~\eqref{eq:app_complex_state}.
Consequently, the homogeneous part of the state evolution (ignoring input injections) is
\begin{equation}
\label{eq:app_homogeneous}
z_{t,d}=\alpha_{t,(d)}\,e^{\mathrm{i}\varphi_{t,d}}\,z_{t-1,d},
\end{equation}
and the cumulative phase advance over time is
\begin{equation}
\label{eq:app_phase_accumulation}
\begin{split}
\arg(z_{t,d})-\arg(z_{0,d})&=\sum_{s=1}^{t}\varphi_{s,d}\ \ (\mathrm{mod}\ 2\pi), \\
\varphi_{s,d} = \delta^{\mathrm{half}}_{s,d}\Bigl(\theta^{\mathrm{tok}}_{s,d}+[W_{\theta m}\mathbf{m}_s]_d\Bigr).
\end{split}
\end{equation}
Define the instantaneous (discrete) angular frequency as
\begin{equation}
\label{eq:app_instant_freq}
\omega_{t,d}\ \triangleq\ \frac{\varphi_{t,d}}{\delta^{\mathrm{half}}_{t,d}}
\ =\ \theta^{\mathrm{tok}}_{t,d}+[W_{\theta m}\mathbf{m}_t]_d.
\end{equation}
Then $\mathbf{m}_t$ contributes an \emph{additive forcing term} $[W_{\theta m}\mathbf{m}_t]_d$ to the system's natural token-driven frequency $\theta^{\mathrm{tok}}_{t,d}$.
Equivalently, compared to a baseline without magnetic injection, the phase trajectory is shifted by
\begin{equation}
\label{eq:app_forcing_shift}
\begin{split}
\Delta \Bigl(\arg(z_{t,d})\Bigr)
&=\sum_{s=1}^t \delta^{\mathrm{half}}_{s,d}\,[W_{\theta m}\mathbf{m}_s]_d \\
&\quad (\mathrm{mod}\ 2\pi).
\end{split}
\end{equation}
\end{proposition}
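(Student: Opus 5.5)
The plan is a direct computation in four steps: identify each 2D pair with a complex scalar, show the rotation block is multiplication by a unit complex number, then unroll the homogeneous recurrence and read off the phase. The main subtlety is the decay factor: it must be a positive real number, so that it does not disturb the argument.

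\textbf{Step 1 (rotation as complex multiplication).} Use the identification \eqref{eq:app_complex_state} and the block in \eqref{eq:R_t_def}. For the pair $(a,b)=(h_{t-1,2d-1},h_{t-1,2d})$, the rotated pair is
\[
(a\cos\varphi_{t,d}-b\sin\varphi_{t,d},\ a\sin\varphi_{t,d}+b\cos\varphi_{t,d}).
\]
Its complex form is exactly $(\cos\varphi_{t,d}+\mathrm{i}\sin\varphi_{t,d})(a+\mathrm{i}b)=e^{\mathrm{i}\varphi_{t,d}}z_{t-1,d}$. This gives \eqref{eq:app_rotation_complex}.

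\textbf{Step 2 (homogeneous recurrence).} Drop the $\boldsymbol{\beta}_t$ and $\boldsymbol{\gamma}_t$ input terms in \eqref{eq:h_t}, leaving $\mathbf{h}_t=\boldsymbol{\alpha}_t\odot R_t(\mathbf{h}_{t-1})$. Since $\boldsymbol{\delta}_t$ is replicated across each pair, the two coordinates of pair $d$ share the same decay. That common decay is $\alpha_{t,(d)}=\exp(\delta^{\mathrm{half}}_{t,d}A_{2d-1})$, and it factors out as a real scalar, giving \eqref{eq:app_homogeneous}.

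Strictly, the paper's $\mathbf{A}=-\exp(\boldsymbol{\rho})$ is $D$-dimensional. Pairwise-equal decay therefore also needs the two entries of $\boldsymbol{\rho}$ in each pair to be tied; I would state this as an implicit assumption (as in Mamba-3's complex parametrization). Otherwise the map is not a pure complex scaling. This is the one point where the statement needs a hidden hypothesis, so I flag it as the main obstacle.

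\textbf{Step 3 (phase accumulation).} We have $\alpha_{t,(d)}>0$, because it is an exponential of a real number. Using $\arg(cwz)=\arg z+\arg w \pmod{2\pi}$ for $c>0$ and $z\neq 0$, unrolling from $s=1$ to $t$ gives \eqref{eq:app_phase_accumulation}. Here I assume $z_{0,d}\neq 0$ so the argument is defined. With $\mathbf{h}_0=\mathbf{0}$, the statement should instead be read for any nonzero starting state of the homogeneous flow, or as an identity for the propagator $\prod_s \alpha_{s,(d)}e^{\mathrm{i}\varphi_{s,d}}$. I would phrase it via that propagator's argument, $\sum_s\varphi_{s,d}$.

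\textbf{Step 4 (frequency and forcing).} Substitute $\varphi_{s,d}=\delta^{\mathrm{half}}_{s,d}\bigl(\theta^{\mathrm{tok}}_{s,d}+[W_{\theta m}\mathbf{m}_s]_d\bigr)$ from \eqref{eq:app_phi_def} and \eqref{eq:app_theta_t}. Since $\delta^{\mathrm{half}}_{t,d}>0$ (softplus), dividing by it is legitimate and gives \eqref{eq:app_instant_freq}.

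For the shift \eqref{eq:app_forcing_shift}, compare against the baseline obtained by setting $W_{\theta m}\mathbf{m}_s=0$ with the same $\delta^{\mathrm{half}}$ and $\theta^{\mathrm{tok}}$. Note that $\theta^{\mathrm{tok}}$ depends only on $\mathbf{u}_t$, so it is unchanged. Subtracting the two cumulative sums from Step 3 leaves $\sum_s\delta^{\mathrm{half}}_{s,d}[W_{\theta m}\mathbf{m}_s]_d \pmod{2\pi}$.
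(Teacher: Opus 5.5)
Your proposal is correct and follows the same route as the paper's proof: identify each rotation block with multiplication by $e^{\mathrm{i}\varphi_{t,d}}$, iterate the homogeneous recurrence to get $z_{t,d}=\bigl(\prod_{s}\alpha_{s,(d)}\bigr)e^{\mathrm{i}\sum_s\varphi_{s,d}}z_{0,d}$, substitute the definition of $\varphi_{s,d}$, and subtract the $\mathbf{m}\equiv\mathbf{0}$ baseline. The paper's proof uses silently the assumptions you make explicit — a decay shared within each pair (which requires tied entries of $\boldsymbol{\rho}$, since $\mathbf{A}\in\mathbb{R}^{D}$), $\alpha_{t,(d)}>0$, $z_{0,d}\neq 0$ despite $\mathbf{h}_0=\mathbf{0}$, and $\delta^{\mathrm{half}}_{t,d}>0$ — so your version is the more careful of the two.
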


\begin{proof}
By definition, $R_t(\cdot)$ rotates each 2D pair by angle $\varphi_{t,d}$.
Under the identification~\eqref{eq:app_complex_state}, a 2D rotation is exactly multiplication by $e^{\mathrm{i}\varphi_{t,d}}$, proving~\eqref{eq:app_rotation_complex}.
Substituting into the first term of~\eqref{eq:h_t} yields the homogeneous evolution~\eqref{eq:app_homogeneous}.
Iterating~\eqref{eq:app_homogeneous} gives
\begin{equation}
z_{t,d}=\left(\prod_{s=1}^t \alpha_{s,(d)}\right)\exp\!\left(\mathrm{i}\sum_{s=1}^t\varphi_{s,d}\right)z_{0,d},
\end{equation}
which implies the phase accumulation~\eqref{eq:app_phase_accumulation}.
Using~\eqref{eq:app_phi_def} and~\eqref{eq:theta_t_all}, we have
\begin{equation}
\begin{split}
\varphi_{t,d} &=\delta^{\mathrm{half}}_{t,d}\bigl(\theta^{\mathrm{tok}}_{t,d}+\theta^{\mathrm{mag}}_{t,d}\bigr) \\
&=\delta^{\mathrm{half}}_{t,d}\Bigl(\theta^{\mathrm{tok}}_{t,d}+[W_{\theta m}\mathbf{m}_t]_d\Bigr),
\end{split}
\end{equation}
which directly yields the instantaneous frequency definition~\eqref{eq:app_instant_freq}.
Finally, subtracting the phase accumulation of a baseline model with $\mathbf{m}_t\equiv \mathbf{0}$ gives the shift~\eqref{eq:app_forcing_shift}.
\end{proof}

\begin{remark}[Interpretation]
\label{rem:app_forcing_interp}
Equation~\eqref{eq:app_instant_freq} shows that Mag-Mamba injects a graph-induced phase control as an additive term on the rotation frequency.
Since $\mathbf{m}_t$ is derived from a Hermitian spectral construction on the geographic graph (Appendix~\ref{app:hermitian}), this term acts as a topology-conditioned forcing on the dynamical phase evolution.
\end{remark}

\end{document}